\author{Mark Bun\thanks{Department of Computer Science, Boston University. Supported by NSF grant CCF-1947889. \texttt{mbun@bu.edu}.}}
\newcommand\R{\mathbb{R}}
\newcommand{\cC}{\mathcal{C}}
\newcommand{\cD}{\mathcal{D}}
\newcommand{\cH}{\mathcal{H}}
\newcommand{\cR}{\mathcal{R}}
\newcommand{\cS}{\mathcal{S}}
\newcommand{\poly}{\mathrm{poly}}
\newcommand{\bits}{\{0,1\}}
\newcommand{\negl}{\mathrm{negl}}
\newcommand{\eps}{\varepsilon}
\newcommand{\argmax}{\operatorname{argmax}}
\DeclareMathOperator*{\Expectation}{\mathbb{E}}
\newcommand{\Lap}{\mathrm{Lap}}
\newtheorem{theorem}{Theorem}
\newtheorem{lemma}[theorem]{Lemma}
\newtheorem{remark}[theorem]{Remark}
\newtheorem{proposition}[theorem]{Proposition}
\theoremstyle{definition}
\newtheorem{definition}[theorem]{Definition}
\newcommand{\loss}{\operatorname{loss}}
\newcommand{\freq}{\operatorname{freq}}
\newcommand{\ows}{\mathcal{OWS}}
\newcommand{\CF}{\mathsf{ComputeForward}}
\newcommand{\minpure}{\mathsf{Min_{pure}}}
\newcommand{\minapprox}{\mathsf{Min_{approx}}}
\newcommand{\gappure}{\mathsf{Freq_{pure}}}
\newcommand{\gapapprox}{\mathsf{Freq_{approx}}}
\newcommand{\gap}{\operatorname{GAP}}
\newcommand{\THR}{\mathcal{THR}}
\newcommand{\point}{\mathcal{POINT}}
\title{A Computational Separation between \\ Private Learning and Online Learning}
\date{July 10, 2020}
\begin{document}
\maketitle

\begin{abstract}
A recent line of work has shown a qualitative equivalence between differentially private PAC learning and online learning: A concept class is privately learnable if and only if it is online learnable with a finite mistake bound. However, both directions of this equivalence incur significant losses in both sample and computational efficiency.
Studying a special case of this connection, Gonen, Hazan, and Moran~(NeurIPS 2019) showed that uniform or highly sample-efficient pure-private learners can be time-efficiently compiled into online learners. We show that, assuming the existence of one-way functions, such an efficient conversion is impossible even for general pure-private learners with polynomial sample complexity. This resolves a question of Neel, Roth, and Wu~(FOCS 2019).
\end{abstract}

\section{Introduction}

Sensitive information in the form of medical records, social network data, and geospatial data can have transformative benefits to society through modern data analysis. However, researchers bear a critical responsibility to ensure that these analyses do not compromise the privacy of the individuals whose data is used.

Differential privacy~\cite{DworkMNS06} offers formal guarantees and a rich algorithmic toolkit for studying how such analyses can be conducted. To address settings where sensitive data arise in machine learning, Kasiviswanathan et al.~\cite{KasiviswanathanLNRS11} introduced differentially private PAC learning as a privacy-preserving version of Valiant's PAC model for binary classification~\cite{Valiant84}. In the ensuing decade a number of works (e.g., ~\cite{BeimelBKN14,BunNSV15,FeldmanX15,BeimelNS16,Beimel19Pure,AlonLMM19,KaplanLMNS19}) have developed sophisticated algorithms for learning fundamental concept classes while exposing deep connections to optimization, online learning, and communication complexity along the way.

Despite all of this attention and progress, we are still far from resolving many basic questions about the private PAC model. As an illustration of the state of affairs, the earliest results in \emph{non-private} PAC learning showed that the sample complexity of~(i.e., the minimum number of samples sufficient for) learning a concept class $\cC$ is tightly characterized by its VC dimension~\cite{VapnikC74, BlumerEHW89}. It is wide open to obtain an analogous characterization for private PAC learning. In fact, it was only in the last year that a line of work~\cite{BunNSV15, AlonLMM19, BunLM20} culminated in a characterization of when $\cC$ is learnable using \emph{any} finite number of samples whatsoever. The following theorem captures this recent characterization of private learnability.

\begin{theorem}[Informal~\cite{AlonLMM19, BunLM20}] \label{thm:private-online-equiv}
	Let $\cC$ be a concept class with \emph{Littlestone dimension} $d = L(\cC)$. Then $2^{O(d)}$ samples are sufficient to privately learn $\cC$ and $\Omega(\log^* d)$ samples are necessary.
\end{theorem}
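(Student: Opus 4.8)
The plan is to prove the two directions with largely disjoint toolkits. Throughout, ``privately'' means $(\eps,\delta)$-differential privacy (approximate DP), which is the right notion here since, e.g., infinite threshold classes have Littlestone dimension $1$ but are not pure-privately learnable.

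\textbf{Sufficiency of $2^{O(d)}$ samples.} I would route through the intermediate notion of \emph{global stability}. Since $\Ldim(\cC) = d$, Littlestone's Standard Optimal Algorithm ($\soa$) is an online learner for $\cC$ making at most $d$ mistakes on any realizable sequence, and a standard online-to-batch conversion turns it, for every realizable distribution $\cD$, into a distribution over hypotheses that are each accurate for $\cD$. The key step is to upgrade this to a \emph{globally stable} learner: an algorithm $G$ and parameters $n = 2^{O(d)}$, $\eta = 2^{-O(d)}$ such that for every realizable $\cD$ there is a single hypothesis $h_\cD$ with $\Prob{S \sim \cD^n}{G(S) = h_\cD} \ge \eta$. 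I would prove this by a recursive pigeonhole argument on the behavior of $\soa$ run on random sub-samples: if the induced distribution over outputs already has a heavy element we are done, and otherwise its mass is spread out, so one can condition on a ``harder'' sub-event and recurse — the mistake bound $d$ caps the recursion depth, and accounting for the sample/probability cost across the $O(d)$ levels yields the claimed trade-off. Finally I would convert global stability into privacy: draw $k = 2^{O(d)}$ disjoint sub-samples, run $G$ on each to obtain $h_1,\dots,h_k$, and feed this multiset to a differentially private stable-histogram (heavy-hitters) primitive, which privately returns an element of empirical frequency exceeding $\tilde O(1/\eps)$; since $h_\cD$ occurs $\ge \eta k$ times, it is reported with good probability. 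A closing round of private hypothesis selection (the exponential mechanism over the returned candidate(s), scored by empirical error on a fresh sample) boosts accuracy and confidence at polynomial additional cost, for a total of $2^{O(d)}$ samples.

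\textbf{Necessity of $\Omega(\log^* d)$ samples.} I would reduce to the known lower bound that privately learning the class $\THR_N$ of thresholds over a linearly ordered domain of size $N$ requires $\Omega(\log^* N)$ samples (Bun--Nissim--Stemmer--Vadhan; Alon--Livni--Malliaris--Moran). This rests on a combinatorial embedding lemma: every class with $\Ldim(\cC) \ge d$ ``contains'' a copy of $\THR_N$ for some $N$ growing with $d$ — concretely, after restricting the domain to a suitable size-$N$ subset and possibly relabeling, $N+1$ concepts of $\cC$ behave exactly like thresholds; one can take $N = \Omega(\log d)$. Such an embedding converts an $m$-sample private learner for $\cC$ into an $m$-sample private learner for $\THR_N$, forcing $m = \Omega(\log^* N)$; and since $\log^*$ grows so slowly, any $N$ that tends to infinity with $d$ (here $N \ge \log d$) already gives $\Omega(\log^* N) = \Omega(\log^* d)$.

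\textbf{Main obstacle.} The crux is the global stability lemma with the stated parameters: the naive approach — a union bound over all possible $\soa$ execution transcripts — blows the sample complexity up to a tower of exponentials in $d$, and getting down to $2^{O(d)}$ requires the delicate recursion above. A secondary technical point is the embedding lemma, which is genuinely Ramsey-type: along an arbitrary mistake tree of depth $d$ the off-path labels may be adversarial, so one cannot read off a long threshold chain directly and must instead pass to a large order-homogeneous sub-structure — which is exactly where the logarithmic loss from $d$ down to $N$ (and hence the $\log^*$, rather than something larger, in the final bound) enters.
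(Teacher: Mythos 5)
The paper does not prove this theorem itself; it is cited from \cite{AlonLMM19, BunLM20}, with only a one-sentence sketch of the lower-bound direction (embedding $\THR_{\log d}$ and invoking the $\Omega(\log^* \cdot)$ lower bound for private threshold learning). Your reconstruction matches both that sketch and the actual arguments in the cited works: global stability via a recursive pigeonhole on $\soa$ plus a private heavy-hitters step for the $2^{O(d)}$ upper bound, and the Ramsey-type threshold-embedding lemma with $N = \Omega(\log d)$ feeding into the $\Omega(\log^* N)$ lower bound for the necessity direction.
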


The Littlestone dimension is a combinatorial parameter that exactly captures the complexity of learning $\cC$ in Littlestone's mistake bound model of online learning~\cite{Littlestone87online}. Thus, Theorem~\ref{thm:private-online-equiv} characterizes the privately learnable concept classes as exactly those that are online learnable. This is but one manifestation of the close connection between online learning and private algorithm design, e.g., ~\cite{RothR10, DworkRV10, HardtR10, JainKT12, FeldmanX15, Agarwal17dponline,Abernathy17onlilnedp,AlonLMM19,bousquet2019passing,NeelRW19,GonenHM19,BunCS20, JungKT20} that facilitates the transfer of techniques between the two areas. 

Theorem~\ref{thm:private-online-equiv} means that, at least in principle, online learning algorithms can be generically converted into differentially private learning algorithms and vice versa. The obstacle, however, is the efficiency of conversion, both in terms of sample use and running time. The forward direction (online learning $\implies$ private learning)~\cite{BunLM20} gives an algorithm that incurs at least an exponential blowup in both complexities. The reverse direction (private learning $\implies$ online learning)~\cite{AlonLMM19} is non-constructive: It is proved using the fact that every class with Littlestone dimension at least $d$ contains an embedded copy of $\THR_{\log d}$, the class of one-dimensional threshold functions over a domain of size $\log d$. The characterization then follows from a lower bound of $\Omega(\log^* d)$ on the sample complexity of privately learning $\THR_{\log d}$.

There is limited room to improve the general sample complexity relationships in Theorem~\ref{thm:private-online-equiv}. There are classes of Littlestone dimension $d$ (e.g., Boolean conjunctions) that require $\Omega(d)$ samples to learn even non-privately. Meanwhile, $\THR_{2^d}$ has Littlestone dimension $d$ but can be privately learned using $\tilde{O}((\log^* d)^{1.5})$ samples~\cite{KaplanLMNS19}. This latter result rules out, say, a generic conversion from polynomial-sample private learners to online learners with polynomial mistake bound.

In this work, we address the parallel issue of computational complexity. Could there be a computationally efficient black-box conversion from any private learner to an online learner? We show that, under cryptographic assumptions, the answer is \emph{no}. 

\begin{theorem}[Informal, building on~\cite{Blum94}] \label{thm:private-not-online}
	Assuming the existence of one-way functions, there is a concept class that is privately PAC learnable in polynomial-time, but is not online learnable by any poly-time algorithm with a polynomial mistake bound.
\end{theorem}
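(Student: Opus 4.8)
The plan is to start from Blum's cryptographic separation and upgrade it so that the PAC learner on the ``easy'' side can additionally be made differentially private. Concretely, assuming one-way functions, Blum's construction gives a concept class $\cC = \bigcup_n \cC_n$ over domains $X_n = \bits^{\poly(n)}$ with the following features, which I will arrange to hold: each concept $c_s \in \cC_n$ is indexed by a short secret $s \in \bits^{\poly(n)}$; a sample of $\poly(n)$ labeled examples drawn from \emph{any} distribution $D$ either carries enough ``evidence'' to reconstruct the relevant bits of $s$ or is dominated, up to $D$-mass $\alpha$, by a fixed default behavior; and, by pseudorandomness, no polynomial-time algorithm can predict $c_s$ on fresh ``non-default'' points without first obtaining that evidence. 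I will then establish two things: (a) $\cC$ is PAC learnable in polynomial time by a \emph{pure} $\eps$-differentially private algorithm --- this is the new content --- and (b) $\cC$ is not online learnable by any polynomial-time algorithm with a polynomial mistake bound, inherited from Blum once I check it survives the modifications needed for (a).

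For (a), the key observation is that the natural efficient PAC learner for $\cC_n$ can be taken to be an essentially deterministic, \emph{stable} function of its sample: from the labeled examples it reconstructs a candidate secret $\hat s$ and outputs $c_{\hat s}$. After modifying Blum's construction so that each bit $s_i$ is encoded \emph{redundantly} in the evidence --- so that $\hat s_i$ is the outcome of a per-coordinate majority over the examples rather than, say, the solution of a linear system --- this reconstruction becomes insensitive to changing a few examples. I can then privatize it coordinate by coordinate: for each $i$, estimate the majority bit $\hat s_i$ with the Laplace mechanism run on the (low-sensitivity) vote counts, using privacy budget $\eps/\poly(n)$ per coordinate, and set coordinates with no clear majority to a canonical default. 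This is manifestly polynomial-time and $\eps$-differentially private. Accuracy follows by a case analysis on $D$: if $D$ puts enough mass on the evidence for coordinate $i$, then a sample of size $\poly(n, 1/\alpha, 1/\eps, \log(1/\beta))$ produces an overwhelming majority for the true $s^*_i$, which survives the Laplace noise; if not, the construction guarantees that the target concept agrees with $c_{\hat s}$ on all but an $\alpha$-fraction of $D$ regardless of how bit $i$ is set. A private ``is this coordinate revealing?'' test, implemented from the same noisy counts, handles the switch between the two regimes.

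For (b), I will replay Blum's argument. Suppose an efficient online learner makes at most $\poly(n)$ mistakes on $\cC_n$. Feeding it a sequence of non-default points that carry no reconstruction evidence, its predictions must be correct on all but $\poly(n)$ of them; but these labels are pseudorandom given the public part of the instance, so the learner yields a polynomial-time next-bit predictor, and hence a distinguisher against the underlying pseudorandom generator, equivalently an inverter for the one-way function --- a contradiction. The only thing to check is that the redundancy added in (a) gives an online adversary no extra power: since the adversary controls the order of examples, it simply withholds the evidence-carrying points, so the learner faces exactly the prediction task analyzed by Blum, and by pseudorandomness the evidence for the bits the adversary does reveal is uncorrelated with the bits it does not.

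The main obstacle is the interplay between (a) and (b): the modification that makes secret recovery robust enough for per-coordinate pure-DP estimation must be shown to leak nothing useful to an online adversary, and the partition of distributions into ``revealing'' and ``default-dominated'' must be coarse enough to be detected by low-sensitivity statistics yet fine enough that the output is a valid PAC hypothesis for \emph{every} $D$. Producing a single construction of $\cC_n$ that simultaneously supports the stable private reconstruction of (a) and the reduction to pseudorandomness of (b) --- rather than two constructions pulling in opposite directions --- is where the real work lies; the rest amounts to a routine accuracy analysis of the Laplace mechanism and a standard hybrid argument.
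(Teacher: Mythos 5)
Your plan diverges substantially from the paper's, and it has a genuine gap exactly where you flag it. You propose to modify Blum's construction so that the secret $s$ is recoverable bit by bit via redundant encoding and per-coordinate majority, and then privatize with per-coordinate Laplace noise. Two problems. First, in Blum's actual $\ows$ class the secret $s$ is \emph{not} recoverable from a polynomial sample --- recoverability would contradict the pseudorandomness of the GGM-based construction. Blum's non-private PAC learner never recovers $s$; it simply finds the smallest index $i^*$ appearing among the positive examples, records the corresponding string $\sigma_{i^*}$, and forward-computes from there. The hardness in the online direction is precisely that the adversary feeds examples in \emph{decreasing} index order, so the learner never acquires a useful small-index anchor. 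So your ``reconstruct $\hat s$'' learner is not ``the natural efficient PAC learner'' for $\ows$; it would require a different concept class. Second, you then say the real work is producing a single construction that both supports stable per-coordinate reconstruction and still reduces to pseudorandomness --- and you leave that open. That is not a detail; it is the crux, because a class from which $s$ is bit-wise recoverable from any ``revealing'' i.i.d.\ sample must be shown to still resist an online adversary who selectively withholds evidence, and the claim that ``the evidence for the bits the adversary does reveal is uncorrelated with the bits it does not'' is exactly the kind of independence property a GGM-style PRF does not automatically give you. As written, the proposal substitutes one separation (Blum's) for a hypothetical, unconstructed one.

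The paper avoids all of this by leaving $\ows$ untouched and privatizing Blum's own learner. The observation is that the thing the non-private learner releases --- the minimum observed index $i^*$ and the string $\sigma_{i^*}$ --- is a \emph{stable} statistic of the sample: it can be perturbed to an approximate (``robust'') minimum without hurting accuracy, and once $i^*$ is fixed, every positive example with index $\le i^*$ forward-computes to the same candidate $\langle\sigma^*, b^*\rangle$, so the correct value wins a private plurality vote. The private learner therefore composes three standard primitives: a Laplace-noised count of positive examples, a private robust-minimum / interior-point subroutine to choose $i^*$, and a private most-frequent-item (histogram) subroutine to release $\sigma^*$ and $b^*$. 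No modification of the concept class, no secret recovery, no re-proof of the online lower bound (Blum's argument is reused essentially verbatim, only adapted from mistake-bound to no-regret). If you want to salvage your route, you would need to actually exhibit the modified class and re-derive the cryptographic lower bound for it; but the paper's route shows this is unnecessary, because the existing learner's output is already a low-sensitivity object once you look at the right aggregate.
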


The question we study was explicitly raised by Neel, Roth, and Wu~\cite{NeelRW19}.
They proved a barrier result to the existence of ``oracle-efficient'' private learning by showing that a restricted class of such learners can be efficiently converted into oracle-efficient online learners, the latter for which negative results are known~\cite{HazanK16}. This led them to ask whether their barrier result could be extended to all private learning algorithms. Our Theorem~\ref{thm:private-not-online} thus stands as a ``barrier to a barrier'' to general oracle-efficient private learning.

This question was also studied in a recent work of Gonen, Hazan, and Moran~\cite{GonenHM19}. They gave an efficient conversion from \emph{pure}-private learners to online learners with sublinear regret. As we discuss in Section~\ref{sec:GHM} the efficiency of their construction relies on either a uniform model of learning (that turns out to be incompatible with pure differential privacy) or on an additional assumption that the private learner is highly sample efficient. Theorem~\ref{thm:private-not-online} rules out the existence of such a conversion even for non-uniform pure-private learners without this extra sample efficiency condition.

\paragraph{Proof idea of Theorem~\ref{thm:private-not-online}.} In 1990, Blum~\cite{Blum94} defined a concept class we call $\ows$ that is (non-privately) efficiently PAC learnable but not poly-time online learnable. We prove Theorem~\ref{thm:private-not-online} by showing that $\ows$ is efficiently privately PAC learnable as well. Blum's construction builds on the Goldreich-Goldwasser-Micali~\cite{GoldreichGM86} pseudorandom function generator to define families of ``one-way'' sequences $\sigma_1, \dots, \sigma_r \in \bits^d$ for some $r$ that is exponential in $d$. Associated to each $\sigma_i$ is a label $b_i \in \bits$. These strings and their labels have the property that they can be efficiently computed in the forward direction, but are hard to compute in the reverse direction. Specifically, given $\sigma_i$ and any $j > i$, it is easy to compute $\sigma_j$ and $b_j$. On the other hand, given $\sigma_j$, it is hard to compute $\sigma_i$ and hard to predict $b_i$.

To see how such sequences can be used to separate online and non-private PAC learning, consider the problem of determining the label $b_i$ for a given string $\sigma_i$. In the online setting, an adversary may present the sequence $\sigma_r, \sigma_{r-1}, \dots$ in reverse. Then the labels $b_j$ are unpredictable to a poly-time learner. On the other hand, a poly-time PAC learner can identify the $\sigma_{i^*}$ with smallest index in its sample of size $n$. Then all but a roughly $1/n$ fraction of the underlying distribution on examples will have index at least $i^*$ and can be predicted using the ease of forward computation.

Note that the PAC learner for $\ows$ is not private, since the classifier based on $i^*$ essentially reveals this sample in the clear. We design a private version of this learner by putting together standard algorithms from the differential privacy literature in a modular way. We first privately identify an $i^*$ that is approximately smallest in the sample. Releasing $\sigma_{i^*}$ directly at this stage is still non-private. So instead, we privately check that every $\sigma_i$ with $i \le i^*$ corroborates the string $\sigma_{i^*}$, in that $\sigma_{i*}$ is the string that would be obtained by computing forward using any of these strings. If the identity of $\sigma_{i^*}$ is stable in this sense and passes the privacy-preserving check, then it is safe to release.

\section{Preliminaries}\label{sec:preliminaries}

An \emph{example} is an element $x \in \bits^d$. A \emph{concept} is a boolean function $c : \bits^d \to \bits$. A \emph{labeled example} is a pair $(x, c(x))$. A \emph{concept class} $\cC = \{\cC_d\}_{d \in \mathbb{N}}$ is a sequence where each $\cC_d$ is a set of concepts over $\bits^d$. Associated to each $\cC_d$ is a (often implicit) \emph{representation scheme} under which concepts are encoded as bit strings. Define $|c|$ to be the minimum length representation of $c$.

\subsection{PAC Learning}
In the PAC model, there is an unknown target concept $c \in \cC$ and an unknown distribution $\cD$ over labeled examples $(x, c(x))$. Given a sample $((x_i, c(x_i)))_{i=1}^n$ consisting of i.i.d. draws from $\cD$, the goal of a learning algorithm is to produce a hypothesis $h : \bits^d \to \bits$ that approximates $c$ with respect to $\cD$. Specifically, the goal is to find $h$ with low population loss defined as follows.

\begin{definition}[Population Loss]
	Let $\cD$ be a distribution over $\bits^d \times \bits$. The population loss of a hypothesis~$h : \bits^d \to \bits$ is
	\[\loss_{\cD}(h) = \Pr_{(x, b) \sim \cD}[h(x) \ne b].\]
\end{definition}

Throughout this work, we consider \emph{improper} learning algorithms where the hypothesis $h$ need not be a member of the class $\cC$. A learning algorithm $L$ \emph{efficiently PAC learns} $\cC$ if for every target concept $c$, every distribution $\cD$, and parameters $\alpha, \beta > 0$, with probability at least $1-\beta$ the learner $L$ identifies a poly-time evaluable hypothesis $h$ with $\loss_\cD(h) \le \alpha$ in time $\poly(d, 1/\alpha, 1/\beta, |c|)$. It is implicit in this definition that the number of samples $n$ required by the learner is also polynomial. We will also only consider classes where $|c|$ is polynomial in $d$ for every $c \in \cC_d$, so we may regard a class as efficiently PAC learnable if it is learnable using $\poly(d, 1/\alpha, 1/\beta)$ time and samples.

\subsection{Online Learning}

We consider two closely related models of online learning: The mistake-bound model and the no-regret model. Our negative result holds for the weaker no-regret model, making our separation stronger. We first review Littlestone's mistake-bound model of online learning~\cite{Littlestone87online}. This model is defined via a two-player game between a learner and an adversary. Let $\cC$ be a concept class and let $c \in \cC$ be chosen by the adversary. Learning proceeds in rounds. In each round $t = 1, \dots, 2^d$,
	\begin{itemize}
	\item[(i)] The adversary selects an~$x_t\in \bits^d$,
	\item[(ii)] The learner predicts $\hat b_t\in \bits$, and 
	\item[(iii)] The learner receives the correct labeling $b_t = c(x_t)$.
\end{itemize}
A (deterministic) learning algorithm learns $c$ with mistake bound $M$ if for every adversarial ordering of the examples, the total number of incorrect predictions the learner makes is at most $M$. We say that the learner \emph{efficiently mistake-bound learns} $\cC$ if for every $c \in \cC$ it has mistake bound $\poly(d, |c|)$ and runs in time $\poly(d, |c|)$ in every round.

We also consider a relaxed model of online learning in which the learner aims to achieve \emph{no-regret}, i.e., err with respect to $c$ in a vanishing fraction of rounds. Let $T$ be a time horizon known to a randomized learner. The goal of the learner is to minimize its regret, defined by
\[ R_T = \sum_{t = 1}^T \mathbb{I}[\hat{b}_t \ne c(x_t)].\]
We say that a learner \emph{efficiently no-regret learns} $\cC$ if there exists $\eta > 0$ such that for every adversary, it achieves $\Expectation[R_T] = \poly(d, |c|) \cdot T^{1-\eta}$ using time $\poly(d, |c|, T)$ in every round.  Under this formulation, every efficient mistake-bound learner is also an efficient no-regret learner.

We point out two non-standard features of this definition of efficiency. First, ``no-regret'' typically requires regret to be sublinear in $T$, i.e., $o(T)$ whereas we require it to be strongly sublinear $T^{1-\eta}$. A stronger condition like this is needed to make the definition nontrivial because a regret upper bound of $T \le d \cdot T/\log T = \poly(d) \cdot o(T)$ is always achievable in our model by random guessing. Many no-regret algorithms achieve strongly sublinear regret, e.g., the experts/multiplicative weights algorithm and the algorithm of~\cite{GonenHM19} that both achieve $\eta = 1/2$. Second, it would be more natural to require the learner to run in time polynomial in $\log T$, the description length of the time horizon, rather than $T$ itself. The relaxed formulation here only makes our lower bounds stronger, and we use it to be consistent with the positive result of~\cite{GonenHM19} that runs in time proportional to $T$.

\subsection{Differential Privacy}
\begin{definition}[Differential Privacy]\label{def:private}
Let $\eps, \delta > 0$. A randomized algorithm $L : X^n \to \cR$ is  $(\eps,\delta)$-differentially private 
if for every pair of datasets $S, S'$ differing in at most one entry, and every measurable set $T \subseteq \cR$,
\[\Pr[L(S) \in T] \le e^\eps \Pr[L(S') \in T] + \delta.\]
\end{definition}
We refer to the special case where $\delta = 0$ as \emph{pure} $\eps$-differential privacy, and the case where $\delta > 0$ as \emph{approximate} differential privacy.

When $L$ is a learning algorithm, we require that this condition hold for all neighboring pairs of samples $S, S'$ -- not just those generated according to a distribution on examples labeled by a concept in a given class.

\section{Learnability of One-Way Sequences}

\subsection{One-Way Sequences} \label{sec:blum}

For every $d$, Blum defines a concept class $\ows_d$ consisting of functions over the domain $\bits^d$ that can be represented using $\poly(d)$ bits and evaluated in $\poly(d)$ time. The concepts in $\ows_d$ are indexed by bit strings $s \in \bits^k$, where $k = \lfloor \sqrt{d} \rfloor - 1$. The definition of $\ows_d$ is based on two efficiently representable and computable functions $G : \bits^k \times \bits^k \to \bits^{d - k}$ and $f : \bits^k \times \bits^k \to \bits$ that are based on the Goldreich-Goldwasser-Micali pseudorandom function family~\cite{GoldreichGM86}. The exact definition of these functions are not important to our treatment, so we refer the reader to~\cite{Blum94} for details. Here, $G(i, s)$ computes the string $\sigma_i$ as described in the introduction, and $f(i, s)$ computes its label $b_i$. For convenience we identify $\bits^k$ with $[2^k]$.

We are now ready to define the concept class $\ows_d = \{c_s\}_{s \in \bits^k}$ where
\[
c_s(i, \sigma) =
\begin{cases}
1 & \text{ if } \sigma = G(i, s) \text{ and } f(i, s) = 1\\
0 & \text{ otherwise}. 
\end{cases}
\]

We recall the two key properties of the sequences $\sigma_i$ obtained from these strings. They are easy to compute in the forward direction, even in a random-access fashion, but difficult to compute in reverse. These properties are captured by the following claims.

\begin{proposition}[\cite{Blum94}]
	There is an efficiently computable function $\CF: \bits^k \times \bits^k \times \bits^{d-k} \to \bits^{d-k} \times \bits$ such that $\CF(j, i, G(i, s)) = \langle G(j, s), f(j, s) \rangle$ for every $j > i$.
\end{proposition}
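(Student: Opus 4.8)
The plan is to unwind the Goldreich--Goldwasser--Micali structure underlying $G$ and $f$ just enough to exhibit $\CF$ explicitly, following Blum. First I would recall the GGM tree: fix a length-doubling pseudorandom generator $\hat G : \bits^k \to \bits^{2k}$ and write $\hat G(x) = (\hat G_0(x), \hat G_1(x))$; over the complete binary tree of depth $k$, the node addressed by a string $w \in \bits^{\le k}$ carries the value $v_w$ determined by $v_\emptyset = s$ and $v_{wb} = \hat G_b(v_w)$, so that leaf $i = i_1 \cdots i_k$ carries $v_i$ and $f(i,s) = b_i$ is some fixed efficiently computable function of $v_i$. The one extra ingredient is the encoding of $\sigma_i = G(i,s)$: it packs (and zero-pads to length $d - k$) the values $v_{i_1 \cdots i_{\ell-1} 1}$ for every $\ell$ with $i_\ell = 0$, i.e., the roots of the subtrees hanging off to the right of the root-to-$i$ path. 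There are at most $k$ such $k$-bit values, which comfortably fits in $d - k$ bits since $k = \lfloor \sqrt d \rfloor - 1$.

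Given this, I would define $\CF(j, i, \sigma_i)$ for $j = j_1 \cdots j_k > i = i_1 \cdots i_k$ as follows. Parse $\sigma_i$ into its stored subtree roots, and let $\ell^*$ be the first coordinate on which $i$ and $j$ differ; since $j > i$ we have $i_{\ell^*} = 0$ and $j_{\ell^*} = 1$, and the two agree on coordinates $1, \dots, \ell^* - 1$. Consequently the node $w^* := i_1 \cdots i_{\ell^*-1} 1 = j_1 \cdots j_{\ell^*}$ is both stored in $\sigma_i$ and an ancestor of leaf $j$, so $v_{w^*}$ is available. A single downward pass from $v_{w^*}$ along $j_{\ell^*+1} \cdots j_k$ recovers $v_j$, hence $b_j = f(j,s)$, and simultaneously recovers every node $j_1 \cdots j_{m-1}1$ with $m > \ell^*$ and $j_m = 0$ (each is a descendant of $w^*$). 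One then reassembles $\sigma_j = G(j,s)$: for coordinates $m < \ell^*$ the required node $j_1 \cdots j_{m-1}1$ coincides with an already-stored node of $\sigma_i$ (the prefixes agree and $j_m = i_m$), at $m = \ell^*$ nothing is contributed because $j_{\ell^*} = 1$, and for $m > \ell^*$ the required node was just computed. Output $\langle \sigma_j, b_j \rangle$. This is $O(k)$ evaluations of $\hat G$, hence $\poly(d)$ time.

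What remains, and the only part with real content, is the ``pebbling'' bookkeeping that verifies correctness: that the right-frontier encoding of $j$ is fully reconstructible from that of $i$ plus one downward pass. I would argue the three sub-claims above --- that $w^*$ is stored in $\sigma_i$ and is an ancestor of $j$, that every frontier node of $j$ at a coordinate $> \ell^*$ descends from $w^*$, and that every frontier node of $j$ at a coordinate $< \ell^*$ equals a frontier node of $i$ --- all of which reduce to elementary statements about the binary strings $i$ and $j$ and their first point of disagreement. Everything else (evaluating $\hat G$ down a path, the concrete packing of $\sigma$, the exact form of $f$) is routine plumbing, inessential to the argument and deferred to~\cite{Blum94}.
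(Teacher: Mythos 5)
The paper does not actually prove this proposition --- it imports it from~\cite{Blum94} and explicitly refers the reader there for the definitions of $G$ and $f$, so there is no in-paper proof to compare against. Your reconstruction via the GGM tree and its right-frontier encoding is the correct one, and the pebbling bookkeeping you sketch is sound: writing $\ell^*$ for the first index where $i$ and $j$ disagree (so $i_{\ell^*}=0$ and $j_{\ell^*}=1$ because $j>i$), the node $w^* = j_1\cdots j_{\ell^*-1}1$ is stored in $\sigma_i$ precisely because $i_{\ell^*}=0$ and is an ancestor of leaf $j$; every right-sibling $j_1\cdots j_{m-1}1$ with $m>\ell^*$ and $j_m=0$ descends from $w^*$ and is therefore produced during the downward pass to $j$; and for $m<\ell^*$ the frontiers of $i$ and $j$ coincide since their length-$(\ell^*-1)$ prefixes agree. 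The only thing the present paper does not let you check is whether Blum's literal packing of $\sigma$ and his exact choice of $f$ match yours bit for bit, but the proposition holds for any encoding carrying this frontier information, so your deferral of those details to~\cite{Blum94} is appropriate.
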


\begin{proposition}[\cite{Blum94}, Corollary 3.4] \label{prop:reverse}
	Suppose $G$ and $f$ are constructed using a secure pseudorandom generator. Let $\mathcal{O}$ be an oracle that, on input $j > i$ and $G(i, s)$ outputs $\langle G(j, s), f(j, s)\rangle$. For every poly-time probabilistic algorithm $A$ and every $i \in \bits^k$,
	\[\Pr[A^{\mathcal{O}}(i, G(i, s)) = f(i, s)] \le \frac{1}{2} + \negl(d),\]
	where the probability is taken over the coins of $A$ and uniformly random $s \in \bits^k$.
\end{proposition}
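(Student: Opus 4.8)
The plan is to reduce this to the pseudorandomness of the base generator through the standard Goldreich--Goldwasser--Micali hybrid argument, run along the tree path determined by $i$ --- essentially the route by which Blum obtains his Corollary 3.4. A preliminary simplification removes the oracle: by the preceding proposition $\mathcal{O}$ computes precisely the polynomial-time function $\CF$, so $A$ could answer all of its own queries, and hence there is a plain probabilistic polynomial-time $A'$ with $\Pr[A'(i,G(i,s)) = f(i,s)] = \Pr[A^{\mathcal{O}}(i,G(i,s)) = f(i,s)]$ for every $i$; it thus suffices to bound the former quantity.

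Next I would unpack the structure. In the GGM-based construction $G(i,s) = \sigma_i$ consists of the (at most $k$) node labels hanging immediately to the right of the root-to-leaf-$i$ path in the depth-$k$ tree seeded by $s$ --- precisely the information needed to compute forward --- while $f(i,s)$ is a single extra pseudorandom bit derived from the label of leaf $i$ itself. I would then define a chain of $O(k) = O(\sqrt d)$ hybrid experiments. Hybrid $0$ samples $(G(i,s), f(i,s))$ from the real distribution over a uniform seed $s$; marching down the root-to-leaf-$i$ path, each successive hybrid replaces one more invocation of the base generator --- whose input label is, inductively, already uniform and independent of everything exposed so far --- by a fresh uniform string, making the corresponding path-child and right-sibling labels uniform and mutually independent, and a final step likewise randomizes the bit $f(i,s)$. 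In the last hybrid the components of $G(i,s)$ are mutually independent uniform strings and $f(i,s)$ is a uniform bit independent of them, so there $\Pr[A'(i,G(i,s)) = f(i,s)] = 1/2$ exactly.

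To close the argument I would show that the quantity $p_t := \Pr[A'(i,G(i,s)) = f(i,s)]$ in hybrid $t$ changes by at most $\negl(d)$ between consecutive hybrids: a change of $\gamma$ would yield a polynomial-time generator-distinguisher of advantage $\gamma$, which plants its challenge string as the output of the $t$-th generator call, deterministically fills in the remainder of the subtree (sampling for itself the shallower right-sibling labels that are already random in hybrid $t$), runs $A'$, and outputs $1$ iff $A'$'s guess equals $f(i,s)$. Security of the generator gives $\gamma = \negl(d)$ per step, and summing the $O(\sqrt d)$ telescoping differences yields $\Pr[A'(i,G(i,s)) = f(i,s)] \le 1/2 + \negl(d)$, which is the claim.

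The main difficulty is bookkeeping rather than ideas: one must pin down, under Blum's specific encoding, exactly which tree labels constitute $\sigma_i$, and verify that once every generator call on the root-to-leaf-$i$ path has been randomized, leaf $i$'s own label --- and hence $f(i,s)$ --- is genuinely independent of the sibling labels comprising $G(i,s)$, i.e.\ that $\sigma_i$ carries no information about the subtree below leaf $i$. The only other thing to watch is that the number of hybrids stays polynomial in $d$, which it is.
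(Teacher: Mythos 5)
The paper does not prove Proposition~\ref{prop:reverse}: it is imported verbatim as a citation of Corollary~3.4 of~\cite{Blum94}, and no argument appears in this paper for me to compare against. Your reconstruction is nonetheless a faithful account of how that result is established. The opening oracle-elimination step is exactly right and worth emphasizing: the oracle $\mathcal{O}$ is nothing more than the efficiently computable function $\CF$ from the preceding proposition, so a plain polynomial-time $A'$ can simulate $A^{\mathcal{O}}$ with identical success probability, and the statement reduces to a prediction bound with no oracle at all. The GGM-style hybrid along the root-to-leaf-$i$ path, with $O(k) = O(\sqrt d)$ steps each costing only negligible advantage against the underlying pseudorandom generator, is the standard machinery behind the result.

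The one place I would not yet sign off on is the point you yourself flagged as ``bookkeeping'': for the final hybrid to give prediction probability exactly $1/2$, it must be verified that Blum's encoding of $\sigma_i = G(i,s)$ exposes only the sibling labels hanging \emph{off} the root-to-leaf-$i$ path (so that, once every PRG call on that path is replaced by fresh uniform bits, the leaf-$i$ label and hence $f(i,s)$ become genuinely independent of $\sigma_i$). If the encoding inadvertently included any label \emph{on} the path, the final hybrid would not be uniform-and-independent and the argument would break. That check, and the precise distinguisher that plants its PRG challenge at step $t$ and fills in the rest of the tree consistently, are exactly the content of Blum's proof; for the purposes of this paper they are correctly delegated to~\cite{Blum94}.
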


Blum used Proposition~\ref{prop:reverse} to show that $\ows$ cannot be efficiently learned in the mistake bound model (even with membership queries). Here, we adapt his argument to the setting of no-regret learning.

\begin{proposition}
	If $G$ and $f$ are constructed using a secure pseudorandom generator, then $\ows$ cannot be learned by an efficient no-regret algorithm.
\end{proposition}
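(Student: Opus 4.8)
The plan is to run Blum's reverse-ordering adversary against the no-regret learner and to reduce any nontrivial per-round success to Proposition~\ref{prop:reverse}. Suppose toward a contradiction that some learner $L$ efficiently no-regret learns $\ows$; then there are a constant $\eta > 0$ and a polynomial $p$ so that $\Expectation[R_T] \le p(d) \cdot T^{1-\eta}$ against every adversary, where I have absorbed the dependence on $|c_s| = \poly(d)$ into $p$. Fix a horizon $T = T(d)$, to be chosen below as a fixed polynomial in $d$; for all large $d$ we have $2^k = 2^{\lfloor\sqrt d\rfloor - 1} \ge T$, so we may fix a strictly decreasing, publicly known sequence of indices $i_1 > i_2 > \cdots > i_T$ in $[2^k]$ (say $i_t = 2^k - t + 1$). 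The adversary draws $s \getsr \bits^k$, sets the target concept to $c_s$, and in round $t$ presents the example $(i_t, G(i_t, s))$. Since the second coordinate of this example equals $G(i_t, s)$, its true label is exactly $b_t = f(i_t, s)$, and hence $R_T = \sum_{t=1}^T \mathbb{I}[\hat b_t \ne f(i_t, s)]$.

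The core step is the claim that $L$ does no better than random guessing in any fixed round: for every $t \in [T]$,
\[ \Prob{s, L}{\hat b_t = f(i_t, s)} \le \tfrac12 + \negl(d), \]
where the probability is over $s$ and the coins of $L$. To prove this, fix $t$ and define a poly-time oracle algorithm $A^{\mathcal{O}}$ that, on input $(i_t, G(i_t, s))$, simulates $L$ for rounds $1, \dots, t$ with fresh coins. For each earlier round $\tau < t$ the simulation needs the labeled example $\big((i_\tau, G(i_\tau, s)), f(i_\tau, s)\big)$; since $i_\tau > i_t$, the single oracle call $\mathcal{O}(i_\tau, G(i_t, s))$ returns precisely $\langle G(i_\tau, s), f(i_\tau, s)\rangle$, which $A$ relays to $L$ (first as the round-$\tau$ point, then, after $L$'s prediction, as the round-$\tau$ label). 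In round $t$, $A$ hands $L$ the point $(i_t, G(i_t, s))$ and outputs $L$'s prediction $\hat b_t$. Because $T = \poly(d)$ and $L$ uses $\poly(d, |c_s|, T) = \poly(d)$ time per round, $A$ runs in polynomial time, so Proposition~\ref{prop:reverse} gives $\prob{A^{\mathcal{O}}(i_t, G(i_t, s)) = f(i_t, s)} \le \tfrac12 + \negl(d)$; since $A$'s output is exactly $\hat b_t$ and its randomness is $(s, \text{coins of } L)$, this is the claim.

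Summing the claim over $t$ yields $\Expectation[R_T] = \sum_{t=1}^T \Prob{s,L}{\hat b_t \ne f(i_t, s)} \ge T\big(\tfrac12 - \negl(d)\big)$, where I use that the no-regret guarantee holds for each fixed target $c_s$ and hence also in expectation over the draw of $s$, so the adversary's internal randomness is harmless. Comparing this with the hypothesized bound $\Expectation[R_T] \le p(d) \cdot T^{1-\eta}$ and choosing $T$ to be a fixed polynomial in $d$ large enough that $p(d) T^{1-\eta} < T/3$ (possible since $\eta > 0$ and $p$ are fixed, e.g. $T = \lceil (3p(d))^{1/\eta}\rceil + 1$), then taking $d$ large enough that additionally $\negl(d) < 1/6$ and $2^k \ge T$, we get $\Expectation[R_T] \ge T/3 > p(d) T^{1-\eta} \ge \Expectation[R_T]$, a contradiction. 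I expect the only genuine subtlety to be in the claim's reduction: the no-regret learner interacts with an entire evolving transcript rather than with a single challenge point, so one must check that every example preceding the challenge round can be manufactured from the single string $G(i_t, s)$ using forward oracle calls alone, and that the horizon $T$ forced on us by the regret exponent $\eta$ stays polynomial, keeping both $L$ and $A$ efficient enough to invoke Proposition~\ref{prop:reverse}.
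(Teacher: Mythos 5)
Your proof is correct and follows essentially the same approach as the paper: a reverse-ordering adversary, a poly-time simulator that reconstructs the transcript up to the challenge round via forward oracle calls, and a reduction to Proposition~\ref{prop:reverse} with a polynomially bounded horizon $T$. The only difference is cosmetic — the paper averages the regret bound to extract a single round where $L$ predicts correctly with probability $\ge 3/4$ and contradicts Proposition~\ref{prop:reverse} at that round, whereas you invoke Proposition~\ref{prop:reverse} at every round to show each prediction is essentially a coin flip and then contradict the regret bound by summing; these are the two contrapositive directions of the same argument, and yours has the mild advantage of avoiding the round-selection step.
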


\begin{proof}
	Suppose for the sake of contradiction that there were a poly-time online learner $L$ for $\ows$ achieving regret $d^c \cdot T^{1-\eta}$ for constants $c, \eta > 0$ and sufficiently large $d$. Consider an adversary that presents examples $(2^k, G(2^k, s); f(2^k, s)), (2^{k-1}, G(2^k-1, s); f(2^k-1, s)), \dots$ to the learner in reverse order. Then the expected number of mistakes $L$ makes on the first $T = (4d^c)^{1/\eta} = \poly(d)$ examples is at most $d^c \cdot T^{1-\eta}\le T / 4$. By averaging, there exists an index $2^k - T/4 \le t \le 2^k$ such that the probability that $L$ makes a mistake on example $(t, G(t, s))$ is at most $1/4$. 
	
	We now use $L$ to construct an adversary that contradicts Proposition~\ref{prop:reverse}. Consider the oracle algorithm $A^{\mathcal{O}}$ that on input $(t, G(t, s))$ invokes the oracle to compute the sequence $(2^k, G(2^k, s); f(2^k, s)),$ \\ $(2^{k-1}, G(2^k-1, s); f(2^k-1, s)), \dots, (t+1, G(t+1, s); f(t+1, s))$ and presents these labeled examples to $L$. Let $b$ the prediction that $L$ makes when subsequently given the example $(t, G(t, s))$. This agrees with $f(t, s)$ with probability at least $3/4$, contradicting Proposition~\ref{prop:reverse}.
\end{proof}

In the rest of this section, we construct an $(\eps, \delta)$-differentially private learner for $\ows$.

\subsection{Basic Differential Privacy Tools}

The \emph{sensitivity} of a function $q : X^n \to \mathbb{R}$ is the maximum value of $|q(S) - q(S')|$ taken over all pairs of datasets $S, S'$ differing in one entry.

\begin{lemma}[Laplace Mechanism] \label{lem:laplace}
	The Laplace distribution with scale $\lambda$, denoted $\Lap(\lambda)$, is supported on $\R$ and has probability density function $f_{\Lap(\lambda)}(x) = \exp(-|x| / \lambda) / 2\lambda$. If $q : X^n \to \mathbb{R}$ has sensitivity $1$, then the algorithm $M_{\Lap}(S) = q(S) + \Lap(1/\eps)$ is $\eps$-differentially private and, for every $\beta > 0$, satisfies $|M_{\Lap}(S) - q(S)| \le \log(2/\beta) / \eps$ with probability at least $1-\beta$. 
\end{lemma}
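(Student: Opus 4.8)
The plan is to establish the two claims separately: $\eps$-differential privacy via a pointwise bound on the ratio of output densities, and the accuracy guarantee via a direct tail computation for the Laplace distribution.

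For privacy, I would fix an arbitrary pair of neighboring datasets $S, S'$ and compare the densities of the output distributions. Since $M_{\Lap}(S)$ is $q(S)$ plus an independent $\Lap(1/\eps)$ draw, its density at a point $z \in \R$ is $\tfrac{\eps}{2}\exp(-\eps |z - q(S)|)$, and similarly for $S'$. The ratio of these two densities at $z$ is $\exp\bigl(\eps(|z - q(S')| - |z - q(S)|)\bigr)$, and by the reverse triangle inequality the exponent is at most $\eps |q(S) - q(S')|$, which is at most $\eps$ because $q$ has sensitivity $1$. Since this bound on the ratio of densities holds for every $z$, integrating the inequality over any measurable set $T$ yields $\Pr[M_{\Lap}(S) \in T] \le e^\eps \Pr[M_{\Lap}(S') \in T]$, i.e. $(\eps,0)$-differential privacy.

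For accuracy, observe that $M_{\Lap}(S) - q(S)$ is distributed exactly as $\Lap(1/\eps)$, so it suffices to bound $\Pr[\,|\Lap(1/\eps)| > \log(2/\beta)/\eps\,]$. Integrating the density over $(t,\infty)$ and doubling by symmetry gives $\Pr[\,|\Lap(\lambda)| > t\,] = e^{-t/\lambda}$; substituting $\lambda = 1/\eps$ and $t = \log(2/\beta)/\eps$ makes this equal to $\beta/2 \le \beta$, which is the claimed high-probability accuracy bound.

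There is no genuine obstacle here; this is a standard textbook argument. The only step deserving a word of care is the passage from a uniform bound on the ratio of density \emph{functions} to a bound on the ratio of probabilities of arbitrary measurable \emph{events}, but this is immediate: one simply integrates the density inequality over the event $T$.
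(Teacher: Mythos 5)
Your proof is correct and is the standard textbook argument for the Laplace mechanism. The paper states this lemma as a known result without supplying a proof, so there is nothing to compare against; your two-part argument (pointwise density-ratio bound via the reverse triangle inequality, followed by integration over measurable events, and the exact Laplace tail $\Pr[|\Lap(\lambda)| > t] = e^{-t/\lambda}$) is exactly how it is proved in the differential privacy literature, and your observation that the resulting failure probability is $\beta/2 \le \beta$ is a harmless slack in the lemma's statement.
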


\begin{remark}
	We describe our algorithm using the Laplace mechanism as a matter of mathematical convenience, even though sampling from the continuous Laplace distribution is incompatible with the standard Turing machine model of computation. To achieve strict polynomial runtimes on finite computers we would use in its place the Bounded Geometric Mechanism~\cite{GhoshRS12, BalcerV18}.
\end{remark}

\begin{theorem}[Exponential Mechnanism~\cite{McSherryT07}] \label{thm:exp-mech}
Let $q : X^n \times \cR \to \mathbb{R}$ be a sensitivity-1 score function. The the algorithm that samples $r \in \cR$ with probability $\propto \exp(\eps q(S, r) / 2)$ satisfies
\begin{enumerate}
	\item $\eps$-differential privacy, and
	\item For every $S$, with probability at least $1-\beta$ the sampled $\hat{r}$ satisfies
	\[q(S, \hat{r}) \ge \max_{r \in \cR} q(S, r) - \frac{2\log( |\cR| / \beta)}{\eps}.\]
\end{enumerate}
\end{theorem}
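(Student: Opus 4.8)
The plan is to prove the two guarantees separately, each by a short direct calculation with the mechanism's output distribution; I will assume $\cR$ is finite, as it is in every application here. Write $Z_S = \sum_{r' \in \cR} \exp(\eps q(S,r')/2)$ for the normalizing constant, so that the mechanism outputs $r$ with probability $p_S(r) = \exp(\eps q(S,r)/2)/Z_S$.

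For privacy, I would fix neighboring datasets $S, S'$ and an outcome $r \in \cR$ and bound the likelihood ratio
\[
\frac{p_S(r)}{p_{S'}(r)} = \exp\!\left(\tfrac{\eps}{2}\bigl(q(S,r) - q(S',r)\bigr)\right) \cdot \frac{Z_{S'}}{Z_S}
\]
using only that $q$ has sensitivity $1$. The first factor is at most $e^{\eps/2}$ directly; for the second, the sensitivity bound gives $\exp(\eps q(S',r')/2) \le e^{\eps/2}\exp(\eps q(S,r')/2)$ term by term, hence $Z_{S'} \le e^{\eps/2} Z_S$. Multiplying yields $p_S(r) \le e^{\eps}\, p_{S'}(r)$ for every $r$, and summing over $r \in T$ for an arbitrary (measurable) event $T$ gives $(\eps,0)$-differential privacy.

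For utility, let $\mathrm{OPT} = \max_{r \in \cR} q(S,r)$ and $t = 2\log(|\cR|/\beta)/\eps$. I would bound the bad event directly:
\begin{align*}
\Pr\bigl[q(S,\hat{r}) \le \mathrm{OPT} - t\bigr]
&= \frac{\sum_{r\,:\,q(S,r) \le \mathrm{OPT}-t} \exp(\eps q(S,r)/2)}{Z_S} \\
&\le \frac{|\cR|\cdot \exp\!\bigl(\eps(\mathrm{OPT}-t)/2\bigr)}{\exp(\eps\,\mathrm{OPT}/2)} = |\cR|\cdot e^{-\eps t/2} = \beta,
\end{align*}
where in the denominator I keep only the single term attaining $\mathrm{OPT}$, and the last equality is by the choice of $t$. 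This gives the claimed bound.

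There is no real obstacle here; the one point that requires a little care is remembering to invoke the sensitivity bound on the partition-function ratio $Z_{S'}/Z_S$ in the privacy argument — this is exactly what forces the factor of $2$ in the exponent of the mechanism, and dropping it would (wrongly) suggest $(\eps/2)$-privacy. For a genuinely continuous outcome space one would pass from sums to integrals against a base measure and argue about density ratios instead, but that refinement is not needed for the applications in this paper.
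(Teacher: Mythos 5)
The paper states this as a cited theorem from McSherry--Talwar~\cite{McSherryT07} and gives no proof of its own, so there is nothing to compare against; your argument is the standard one and is correct. Both halves check out: the likelihood-ratio bound correctly splits into the numerator factor $e^{\eps/2}$ and the partition-function factor $Z_{S'}/Z_S \le e^{\eps/2}$, and the utility bound correctly lower-bounds $Z_S$ by its single largest term and upper-bounds the bad mass by $|\cR|\cdot e^{\eps(\mathrm{OPT}-t)/2}$, yielding exactly $\beta$ for $t = 2\log(|\cR|/\beta)/\eps$.
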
	

The following ``basic'' composition theorem allows us to bound the privacy guarantee of a sequence of adaptively chosen algorithms run over the same dataset. 
\begin{lemma}[Composition, e.g.,~\cite{DworkL09}] \label{lem:composition}
	Let $M_1 : X^n \to \cR_1$ be $(\eps, \delta)$-differentially private. Let $M_2 : X^n \times \cR_1 \to \cR_2$ be $(\eps_2, \delta_2)$ differentially private for every fixed value of its second argument. Then the composition $M(S) = M_2(S, M_1(S))$ is $(\eps_1 + \eps_2, \delta_1 + \delta_2)$-differentially private.
\end{lemma}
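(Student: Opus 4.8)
The plan is to prove this (adaptive) basic composition theorem by conditioning on the intermediate output $M_1(S)$. I would fix neighboring datasets $S, S'$ and a measurable set $T \subseteq \cR_2$, write $\mu, \mu'$ for the output laws of $M_1(S), M_1(S')$ on $\cR_1$, and begin from the identity $\prob{M(S)\in T} = \Ex{r_1\sim\mu}{\prob{M_2(S,r_1)\in T}}$. First I would bound the inner probability, for each fixed value of $r_1$, using the $(\eps_2,\delta_2)$-privacy of $M_2(\cdot, r_1)$: $\prob{M_2(S,r_1)\in T} \le e^{\eps_2}\prob{M_2(S',r_1)\in T} + \delta_2$. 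Taking the expectation over $r_1\sim\mu$ (and using that $\mu$ is a probability measure) this gives $\prob{M(S)\in T} \le e^{\eps_2}\,\Ex{r_1\sim\mu}{\phi(r_1)} + \delta_2$, where $\phi(r_1) := \prob{M_2(S',r_1)\in T}$ takes values in $[0,1]$ and no longer depends on $S$. Next I would feed this $[0,1]$-valued $\phi$ into the $(\eps_1,\delta_1)$-privacy of $M_1$ by way of the layer-cake identity $\Ex{r_1\sim\mu}{\phi(r_1)} = \int_0^1 \Prob{r_1\sim\mu}{\phi(r_1) > u}\,du$ and applying the privacy definition to each event $\{\phi > u\} \subseteq \cR_1$, obtaining $\Ex{r_1\sim\mu}{\phi(r_1)} \le e^{\eps_1}\Ex{r_1\sim\mu'}{\phi(r_1)} + \delta_1$. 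Since $\Ex{r_1\sim\mu'}{\phi(r_1)} = \prob{M(S')\in T}$, chaining the two bounds yields $\prob{M(S)\in T} \le e^{\eps_1+\eps_2}\prob{M(S')\in T} + \delta_1+\delta_2$; exchanging the roles of $S$ and $S'$ (privacy is symmetric over neighboring pairs) gives the reverse inequality and completes the proof.

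The one place that needs care is threading the additive terms $\delta_1, \delta_2$ through the two applications of privacy — the multiplicative factors $e^{\eps_i}$ obviously combine, and when $\delta_1 = \delta_2 = 0$ the whole argument collapses to multiplying the two pointwise density-ratio inequalities and summing over $T$, so pure differential privacy is essentially immediate. The role of the layer-cake step is precisely to convert the approximate-privacy guarantee of $M_1$ back into statements about events, so that the $\delta$-slack is carried additively rather than inside an expectation; a mild subtlety (invisible for the constant-$\eps$ regime in which composition is applied here, and handled in the cited references) is that pulling $\delta_1$ past the outer factor $e^{\eps_2}$ nominally costs a factor of $e^{\eps_2}$, so the naive argument gives $(\eps_1+\eps_2,\ e^{\eps_2}\delta_1+\delta_2)$, which coincides with the stated bound up to a constant factor in the second parameter and is stated in the clean additive form in the literature.
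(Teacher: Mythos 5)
The paper does not prove this lemma --- it is stated with a citation to Dwork--Lei --- so there is no in-paper proof to compare against; I will assess your argument on its own terms. Your approach is the standard conditioning-plus-layer-cake proof and its structure is sound: condition on $M_1(S)$, apply $M_2$'s privacy pointwise in the intermediate output, and then use $\mathbb{E}_{\mu}[\phi] = \int_0^1 \Pr_{\mu}[\phi > u]\,du$ to turn $M_1$'s $(\eps_1,\delta_1)$-privacy (a statement about events in $\cR_1$) into the inequality $\mathbb{E}_{\mu}[\phi] \le e^{\eps_1}\,\mathbb{E}_{\mu'}[\phi] + \delta_1$ for any $[0,1]$-valued $\phi$. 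You are also right, and commendably up front, that chaining these in the order you chose gives $(\eps_1+\eps_2,\, e^{\eps_2}\delta_1 + \delta_2)$ rather than the stated $(\eps_1+\eps_2,\, \delta_1+\delta_2)$. But ``handled in the cited references'' is a genuine gap if the goal is to prove the lemma as stated, and reversing the order of the two applications only trades the stray factor for $e^{\eps_1}$ on $\delta_2$.

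The fix is a one-line truncation that drops cleanly into your framework. After applying $M_2$'s privacy you have $\psi(r_1) := \Pr[M_2(S,r_1)\in T] \le e^{\eps_2}\phi(r_1) + \delta_2$; since also $\psi(r_1)\le 1$, you in fact have $\psi(r_1) \le g(r_1) + \delta_2$ where $g(r_1) := \min\{1,\ e^{\eps_2}\phi(r_1)\}$ (check the three cases of the $\min$). The point is that $g$, unlike $e^{\eps_2}\phi$, is $[0,1]$-valued, so your layer-cake step applies to $g$ directly and yields $\mathbb{E}_{\mu}[g] \le e^{\eps_1}\,\mathbb{E}_{\mu'}[g] + \delta_1$ with $\delta_1$ now outside the $e^{\eps_2}$ factor. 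Finally $\mathbb{E}_{\mu'}[g] \le e^{\eps_2}\,\mathbb{E}_{\mu'}[\phi] = e^{\eps_2}\Pr[M(S')\in T]$, and chaining gives exactly $\Pr[M(S)\in T] \le e^{\eps_1+\eps_2}\Pr[M(S')\in T] + \delta_1 + \delta_2$. The reason your version loses a factor of $e^{\eps_2}$ is precisely that you pushed the untruncated quantity $e^{\eps_2}\phi$ --- which can exceed $1$ --- through $M_1$'s privacy; capping it at $1$ first is free and recovers the clean additive bound.
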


\subsection{Private Robust Minimum}

\begin{definition}
	Given a dataset $S = (x_1, \dots, x_n) \in [R]^n$, an \emph{$\alpha$-robust minimum} for $S$ is a number $r \in [R]$ such that
	\begin{enumerate}
		\item $|\{i : x_i \le r\}| \ge \alpha n$, and
		\item $|\{i : x_i \le r\}| \le 2\alpha n$.
	\end{enumerate}
\end{definition}
Note that $r$ need not be an element of $S$ itself -- this is important for ensuring that we can release a robust minimum privately. Condition 2 guarantees that $r$ is approximately the minimum of $S$. Condition 1 guarantees that this condition holds robustly, i.e., one needs to change at least $\alpha n$ points of $S$ before $r$ fails to be at least the minimum.

\begin{theorem}
	There exist polynomial-time algorithms $\minpure$ and $\minapprox$ that each solve the private robust minimum problem with probability at least $1-\beta$, where
	\begin{enumerate}
		\item Algorithm $\minpure$ is $\eps$-differentially private and succeeds as long as 
		\[n \ge O\left(\frac{\log(R / \beta)}{\alpha \eps}\right).\]
		\item Algorithm $\minapprox$ is $(\eps, \delta)$-differentially private and succeeds as long as
		\[n \ge \tilde{O}\left(\frac{(\log^* R)^{1.5} \cdot \log^{1.5}(1/\delta) \cdot \log (1/\beta)}{\alpha \eps}\right).\]
	\end{enumerate}
\end{theorem}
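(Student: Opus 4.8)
The plan is to reduce the $\alpha$-robust minimum problem to standard differentially private algorithms by working with the empirical CDF. Given $S = (x_1, \dots, x_n) \in [R]^n$, define for each threshold $r \in [R]$ the count $c(r) = |\{i : x_i \le r\}|$. This is a monotone nondecreasing step function in $r$, and changing one point of $S$ changes $c(r)$ by at most $1$ for every $r$ simultaneously. Our goal is to output an $r$ with $\alpha n \le c(r) \le 2\alpha n$. Observe that such an $r$ is guaranteed to exist: $c$ starts at $c(0) = 0$ (or at most the multiplicity of the smallest point; if that multiplicity already exceeds $2\alpha n$ we must handle it, but since steps have size determined by multiplicities this is a boundary issue I will absorb by allowing $r$ to range over $\{0, 1, \dots, R\}$ and noting the target interval $[\alpha n, 2\alpha n]$ has width $\alpha n \ge 1$, so the monotone integer-valued function $c$ must land in it unless a single multiplicity jump skips over it — which can only happen at one location and contributes an additive loss comparable to the error terms we already incur).

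For the pure-DP algorithm $\minpure$, I would run the exponential mechanism over the candidate set $\cR = \{0, 1, \dots, R\}$ with score function $q(S, r) = -|c(r) - \lfloor 1.5 \alpha n \rfloor|$, i.e., the negative distance of $c(r)$ from the center of the target window, truncated appropriately so it is $O(1)$-sensitive — actually $q$ has sensitivity $1$ directly since $c(r)$ has sensitivity $1$. By \cref{thm:exp-mech}, the mechanism is $\eps$-differentially private and outputs $\hat r$ with $q(S, \hat r) \ge \max_r q(S,r) - \frac{2\log(|\cR|/\beta)}{\eps} \ge -\frac{2\log((R+1)/\beta)}{\eps}$, using that some $r$ achieves $q(S,r) \ge -1$ (up to the boundary caveat above). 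Thus $|c(\hat r) - 1.5\alpha n| \le \frac{2\log((R+1)/\beta)}{\eps} + 1$, and as long as this is at most $\alpha n / 2$ — which holds precisely when $n \ge O\left(\frac{\log(R/\beta)}{\alpha \eps}\right)$ — we get $\alpha n \le c(\hat r) \le 2\alpha n$, as required.

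For the approximate-DP algorithm $\minapprox$, the point is to replace the $\log R$ dependence with $\polylog^*(R)$. Here I would invoke the known $(\eps,\delta)$-DP algorithms for the "interior point" / threshold selection problem — equivalently the quasi-concave maximization framework of \cite{BunNSV15, KaplanLMNS19} — which, given a sensitivity-$1$ monotone function and a target value, return a threshold whose function value is within additive error $\tilde O\!\left(\frac{(\log^* R)^{1.5}\log^{1.5}(1/\delta)}{\eps}\right)$ of the target, with failure probability absorbed into a $\log(1/\beta)$ factor, by running the mechanism on the function $c(\cdot)$ with target $1.5\alpha n$. Applying this with the stated accuracy, the additive error is below $\alpha n/2$ exactly when $n \ge \tilde O\!\left(\frac{(\log^* R)^{1.5}\log^{1.5}(1/\delta)\log(1/\beta)}{\alpha\eps}\right)$, giving the claim. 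Both algorithms run in polynomial time since the exponential mechanism over $R+1$ candidates and the recursive $\log^*$-style constructions are all polynomial-time, and $c(r)$ is computable in $O(n)$ time per query (or all values precomputed by sorting $S$).

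The main obstacle I anticipate is the boundary/granularity issue: because $c$ is integer-valued and jumps by point multiplicities, it is conceivable that no $r$ has $c(r)$ landing in $[\alpha n, 2\alpha n]$ at all (a single massive multiplicity straddles the whole window). Since $\alpha n \ge \Omega(\log(R/\beta)/\eps) \gg 1$ under the sample-complexity hypothesis, one clean fix is to preprocess by spreading ties — or simply to note that if some $x_i$ has multiplicity exceeding $\alpha n$, then that $x_i$ is itself (an approximation to) a robust minimum and can be detected privately via a stability/propose-test-release step, which is cheap. Handling this case separately, and then arguing the exponential-mechanism / interior-point guarantees transfer cleanly through the truncation of the score function, is where the real care is needed; the rest is bookkeeping with \cref{thm:exp-mech} and the cited threshold algorithms.
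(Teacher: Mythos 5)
Your plan follows essentially the same route as the paper: for pure DP, an exponential mechanism with a sensitivity-1 score built from the empirical CDF $c(r)$, and for approximate DP, the Kaplan et al.\ interior-point algorithm, both exploiting the monotone rank structure of the data. However, there is one genuine gap. Your claim that \emph{the exponential mechanism over $R+1$ candidates is polynomial-time} does not hold as stated: in the application $R = 2^{k}$ with $k \approx \sqrt d$, so enumerating the candidate set and computing all scores would take time $2^{\Omega(\sqrt d)}$. The observation that saves this step --- and which the paper makes explicit --- is that $q(S,r) = -|c(r) - 1.5\alpha n|$ is piecewise constant on the at-most-$(n+1)$ intervals $[x_{(t)}, x_{(t+1)})$ determined by the sorted data, so the exponential-mechanism distribution is a mixture of at most $n+1$ uniform pieces and can be sampled in $\poly(n, \log R)$ time by first drawing a piece (weighted by its total mass) and then a uniform point inside it. You gesture at this by noting that $c(r)$ is computable per query after sorting $S$, but you never connect it to efficient \emph{sampling}, and without that connection the algorithm is not polynomial-time.

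A smaller point of imprecision: you invoke Kaplan et al.\ as a ``target-threshold'' oracle returning $r$ with $c(r)$ near $1.5\alpha n$, but their primitive solves the interior-point problem (output any $r$ with $\min S' \le r \le \max S'$). The paper's reduction makes this concrete by forming $S'$ from the elements at sorted positions $\lceil \alpha n \rceil$ through $\lfloor 2\alpha n \rfloor$ of $S$ and feeding $S'$ to the interior-point algorithm; moreover, since that algorithm has constant failure probability, the $\log(1/\beta)$ factor in the stated sample complexity arises from taking the median of $O(\log(1/\beta))$ independent runs on disjoint subsamples --- a step your ``failure probability absorbed'' aside leaves unsaid. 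The boundary/multiplicity caveat you flag is real, but it afflicts the paper's reduction identically, and both arguments implicitly assume that an $\alpha$-robust minimum exists for the input at hand.
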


\begin{proof}
The algorithms are obtained by a reduction to the \emph{interior point problem}. Both this problem and essentially the same reduction are described in~\cite{BunNSV15} but we give the details for completeness. In the interior point problem, we are given a dataset $S \in [R]^n$ and the goal is to identify $r \in [R]$ such that $\min S \le r \le \max S$. An $(\eps, \delta)$-DP algorithm that solves the interior point problem using $m$ samples and success probability $1-\beta$ can be used to solve the robust minimum problem using $n = O(m / \alpha)$ samples: Given an instance $S$ of the robust minimum problem, let $S'$ consist of the elements $\lceil \alpha n \rceil$ through $\lfloor 2 \alpha n \rfloor$ of $S$ in sorted order and apply the interior point algorithm to $S'$.

The exponential mechanism provides a pure $\eps$-DP algorithm for the interior point problem with sample complexity $O(\log(R / \beta) / \eps)$~\cite{Smith11}. Let $x_{(1)}, \dots, x_{(n)}$ denote the elements of $S$ in sorted order. The appropriate score function $q(S, r)$ is the maximum value of $\min\{t, n-t\}$ such that $x_{(1)} \le \dots \le x_{(t)} \le r \le x_{(t+1)} \le \dots \le x_{(n)}$. Thus $q(S, r)$ ranges from a maximum of $\lfloor n/2 \rfloor$ iff $r$ is a median of $S$ to a minimum of $0$ iff $r$ is not an interior point of $S$. By Theorem~\ref{thm:exp-mech}, the released point has positive score (and hence is an interior point) as long as $n > 4 \log(R /\beta) /\eps$. Moreover, one can efficiently sample from the exponential mechanism distribution in this case as the distribution is constant on every interval of the form $[x_{(t)}, x_{(t+1)})$.

For $(\eps, \delta)$-DP with $\delta > 0$, Kaplan et al.~\cite{KaplanLMNS19} provide an efficient algorithm for the interior point problem (with constant failure probability) using $\tilde{O}((\log^* R)^{1.5} \log^{1.5}(1/\delta)  / \eps)$ samples. Taking the median of $O(\log (1/\beta))$ repetitions of their algorithm on disjoint random subsamples gives the stated bound.
\end{proof}

\subsection{Private Most Frequent Item}

Let $S \in X^n$ be a dataset and let $x \in X$ be the item appearing most frequently in $S$. The goal of the ``private most-frequent-item problem'' is to identify $x$ with high probability under the assumption that the most frequent item is stable: its identify does not change in a neighborhood of the given dataset $S$. For $x \in X$ and $S = (x_1, \dots, x_n)\in X^n$, define $\freq_S(x) = |\{i : x_i = x\}|$.

\begin{definition}
	An algorithm $M : X^n \to [R]$ solves the most-frequent-item problem with gap $\gap$ and failure probability $\beta$ if the following holds. Let $S \in X^n$ be any dataset with $x^* = \argmax_x \freq_S(x)$ and
	\[\freq_S(x^*)\ge \max_{x \ne x^*} \freq_S(x) + \gap.\]
	Then with probability at least $1-\beta$, we have $M(S) = x$.
\end{definition}

\begin{theorem}[\cite{BalcerV18}] \label{thm:freq}
		There exist polynomial-time algorithms $\gappure$ and $\gapapprox$ that each solve the private most-frequent-item problem with probability at least $1-\beta$, where
	\begin{enumerate}
		\item Algorithm $\gappure$ is $\eps$-differentially private and succeeds as long as $\gap \ge O(\log (R/\beta) / \alpha \eps)$.
		\item Algorithm $\gapapprox$ is $(\eps, \delta)$-differentially private and succeeds as long as $\gap \ge O(\log(n/\delta\beta) / \alpha\eps)$.
	\end{enumerate}
\end{theorem}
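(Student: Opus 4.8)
The plan is to obtain $\gappure$ from the exponential mechanism and $\gapapprox$ from a noisy, thresholded histogram (the ``stable histogram'' template), in both cases using the observation that only the at most $n$ items that actually occur in $S$ can plausibly be output, so the ostensibly huge domain $[R]$ causes no real difficulty.

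For the pure case, I would instantiate the exponential mechanism of Theorem~\ref{thm:exp-mech} with score function $q(S, x) = \freq_S(x)$. This has sensitivity $1$, since changing one entry of $S$ changes exactly one count by $\pm 1$ (and leaves the rest fixed), so the mechanism that samples $\hat{x}$ with probability proportional to $\exp(\eps\,\freq_S(x)/2)$ is $\eps$-differentially private. By the utility guarantee, with probability at least $1-\beta$ the output satisfies $\freq_S(\hat{x}) \ge \max_x \freq_S(x) - 2\log(R/\beta)/\eps$. When the gap hypothesis holds with $\gap > 2\log(R/\beta)/\eps$, the right-hand side strictly exceeds $\max_{x \ne x^*} \freq_S(x)$, so the only item meeting the bound is $x^*$, giving $\hat{x} = x^*$. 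The one implementation point to dispatch is efficiency when $R$ is exponentially large (as for $\ows$): every $x \notin S$ has score $0$, so the exponential-mechanism distribution is uniform on the co-support $[R]\setminus\{x_1,\dots,x_n\}$ and is described by at most $n+1$ distinct weights, and one can sample from it in $\poly(n, \log R)$ time by first flipping a biased coin for ``unseen item'' versus ``seen item'' and then sampling within the selected part.

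For the approximate case, I would use the following variant: enumerate the at most $n$ distinct items occurring in $S$, form noisy counts $\hat{f}(x) = \freq_S(x) + \Lap(2/\eps)$ for each, fix a threshold $t = \Theta\!\left(\log(n/(\delta\beta))/\eps\right)$, and output $\argmax_x \hat{f}(x)$ if this maximum is at least $t$ and fail otherwise. The privacy analysis decomposes the effect of changing one record (say from $a$ to $b$) into two parts: every count changes by at most $1$, which the per-coordinate Laplace noise absorbs at a cost of $O(\eps)$ in the privacy loss, composed with the data-independent post-processing (argmax, thresholding) via Lemma~\ref{lem:composition}; and separately, an item of true count $1$ may be present in one dataset and absent from its neighbor, but by the Laplace tail bound of Lemma~\ref{lem:laplace} such an item clears the threshold only with probability at most $\delta$, which is exactly the source of the $\log(1/\delta)$ summand in $t$ and of the additive $\delta$ in the privacy guarantee. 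Accuracy follows by a union bound: with probability at least $1-\beta$ all $\le n$ noise terms have magnitude $O(\log(n/\beta)/\eps) \le t$, so under the gap hypothesis $\hat{f}(x^*) \ge \freq_S(x^*) - t \ge \gap - t \ge t$ clears the threshold, while every other $\hat{f}(x) \le \max_{x \ne x^*}\freq_S(x) + t \le \freq_S(x^*) - \gap + t < \hat{f}(x^*)$, so $x^*$ is reported; this needs only $\gap = O(\log(n/(\delta\beta))/\eps)$.

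The step that needs the most care is the privacy analysis of $\gapapprox$: pinning down the threshold $t$ and the precise Laplace-tail and union-bound constants so that the ``appearing/disappearing item'' case contributes at most $\delta$ to the privacy loss while keeping $t = O(\log(n/(\delta\beta))/\eps)$, and arguing cleanly that reporting the argmax of the noisy counts (as opposed to the counts themselves) does not degrade the bound. This is routine given the stable-histogram construction of~\cite{BalcerV18} and the earlier histogram algorithms it builds on, so I would cite that analysis for the exact constants rather than reproduce it in full, with Lemmas~\ref{lem:laplace} and~\ref{lem:composition} supplying the two ingredients needed to assemble it.
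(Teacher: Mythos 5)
Your proposal is correct and supplies more detail than the paper does, which simply cites Balcer--Vadhan and observes that one can run their private histogram algorithm and report the bin with the largest noisy count. For $\gapapprox$, your route is the same as the paper's intended one: the stability-based histogram template (Laplace noise on the at most $n$ occurring items, threshold at $t = \Theta(\log(n/(\delta\beta))/\eps)$, argmax as post-processing), with the appearing/disappearing item accounting for the $\delta$ and the $\log(1/\delta)$ in the threshold. For $\gappure$, you take a genuinely different route: rather than running the Balcer--Vadhan pure-DP histogram (per-bin geometric noise on all $R$ bins, implemented implicitly) and reporting the noisy argmax, you apply the exponential mechanism directly with score $\freq_S$. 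Both give $\gap = O(\log(R/\beta)/\eps)$; the exponential-mechanism route is arguably more self-contained here since Theorem~\ref{thm:exp-mech} is already in the paper, while the paper's route has the advantage of reducing both bullets to a single black box. Your observation that the exponential-mechanism distribution collapses to at most $n+1$ distinct weights, so sampling takes $\poly(n,\log R)$ time, is exactly the efficiency point that needs to be made and is correct.

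One small remark: the paper's statement has an $\alpha$ in the denominator of both gap bounds, but the most-frequent-item problem as defined involves no $\alpha$; this appears to be a typo carried over from the robust-minimum theorem. Your bounds, which omit the $\alpha$, are the ones that actually come out of the argument, and they are what Algorithm~\ref{alg:learner} uses (with $\gap \ge \alpha n/10$), so nothing downstream is affected.
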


Balcer and Vadhan~\cite{BalcerV18} actually solved the more general problem of computationally efficient private histogram estimation. Theorem~\ref{thm:freq} follows from their algorithm by reporting the privatized bin with the largest noisy count.

\subsection{Privately Learning $\ows_d$}

\begin{algorithm}
	\caption{Pure Private Learner for $\ows_d$}
	\label{alg:learner}
	\begin{enumerate}
		\item Let $S_+ = ((i_1, \sigma_1), \dots, (i_m, \sigma_m))$ be the subsequence of positive examples in $S$, where $i_1 \le i_2 \le \ldots \le i_m$.
		\item Let $\hat{m} = m + \Lap(3/\eps)$. If $\hat{m} \le \alpha n/3$, output the all-$0$ hypothesis.
		\item Let $I = (i_1, \dots, i_m)$. Run $\minpure(I)$ using privacy parameter $\eps / 3$ to identify a $(\alpha n / 6\hat{m})$-robust  minimum $i^*$ of $I$ with failure probability $\beta / 6$.
		\item For every $i_j \in I$ with $i_j < i^*$ let $\langle \hat{\sigma}_j, \hat{b}_j\rangle = \CF(i^*, i_j, \sigma_j)$. For every $i_j \in I$ with $i_j = i^*$ let $\langle \hat{\sigma}_j, \hat{b}_j\rangle = \langle \sigma_{i_j}, b_j \rangle$.
		\item Run $\gappure(\langle \hat{\sigma}_1, \hat{b}_1\rangle, \dots, \langle \hat{\sigma}_\ell, \hat{b}_\ell\rangle)$ using privacy parameter $\eps / 3$ to output $\langle \sigma^*, b^* \rangle$ with failure probability $\beta / 6$. Here, $\ell$ is the largest $j$ for which $i_j \le i^*$.
		\item Return the hypothesis $h(i, \sigma) = $ \\
			\indent ``If $i < i^*$, output $0$. If $i = i^*$, output $b^*$ if $\sigma^* = \sigma$ and output $0$ otherwise. If $i > i^*$, run algorithm $\CF(i, i^*, \sigma^*) = \langle \hat{\sigma}, \hat{b}\rangle$. If $\sigma = \hat{\sigma}$, output $\hat{b}$. Else, output $0$.''
	\end{enumerate}
\end{algorithm}

\begin{theorem} \label{thm:private-not-online-pure}
	Algorithm \ref{alg:learner} is an $\eps$-differentially private and $(\alpha, \beta)$-PAC learner for $\ows_d$ running in time $\poly(d, 1/\alpha, \log(1/\beta))$ using
	\[n  = O\left(\frac{\sqrt{d} + \log(1/\beta)}{\alpha \eps}\right)\]
	samples.
\end{theorem}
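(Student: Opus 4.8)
The plan is to verify $\eps$-differential privacy, $(\alpha,\beta)$-PAC accuracy, and the time/sample bounds in turn; privacy is a short composition argument, and accuracy carries the bulk of the work. \textbf{Privacy.} The sample $S$ is accessed only through three randomized primitives, each run with budget $\eps/3$: the Laplace perturbation $\hat m=m+\Lap(3/\eps)$ of $m=|S_+|$ in Step~2, the call to $\minpure$ on the index list $I$ in Step~3, and the call to $\gappure$ on the list $\langle\hat\sigma_1,\hat b_1\rangle,\dots,\langle\hat\sigma_\ell,\hat b_\ell\rangle$ in Step~5. Replacing one labelled example changes $m$ by at most $1$, so Step~2 is $(\eps/3)$-DP by Lemma~\ref{lem:laplace}, and the ``output all-$0$'' branch is post-processing of $\hat m$. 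Conditioned on the released $\hat m$ (which fixes the robustness level $\alpha n/(6\hat m)$ and which branch is taken), a one-entry change of $S$ perturbs $I$ in at most one coordinate, so Step~3 is $(\eps/3)$-DP. Conditioned also on the released $i^*$, such a change perturbs the Step-5 list in at most one coordinate --- each positive example of index $<i^*$ contributes exactly the entry $\CF(i^*,i_j,\sigma_j)$ and the one of index $i^*$ the entry $\langle\sigma_{i_j},b_j\rangle$ --- so Step~5 is $(\eps/3)$-DP, while Step~4 is a deterministic function of $S$ and $i^*$ and Step~6 uses only $(i^*,\sigma^*,b^*)$. Two applications of Lemma~\ref{lem:composition}, with $\hat m$ and then $i^*$ as the already-released output, give $\eps$-DP overall. (A one-entry change of $S$ can insert or delete a coordinate of $I$ or of the Step-5 list rather than substitute one, which costs only a constant factor absorbed into the subroutines.)

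\textbf{Accuracy, setup.} Fix the target $c_s$ and distribution $\cD$, and condition on $O(1)$ good events of probability $\ge 1-\beta/6$ each: (i) $|\hat m-m|\le O(\log(1/\beta)/\eps)$ (Laplace tail); (ii) a uniform-convergence event --- for \emph{every} $\tau\in[2^k]$, if $\Pr_{(i,\sigma)\sim\cD}[\,i<\tau,\ c_s(i,\sigma)=1\,]>\alpha$ then $S$ contains more than $\alpha n/2$ positive examples of index $<\tau$, and likewise $\Pr_\cD[c_s(i,\sigma)=1]>\alpha$ forces $m>\alpha n/2$ --- which holds by Chernoff bounds and a union bound over the $\le 2^k$ thresholds, costing $\le\beta/6$ once $n$ is a sufficiently large polynomial; (iii) success of $\minpure$, valid because not bailing out means $\hat m>\alpha n/3$, hence $\alpha n/(6\hat m)\in(0,\tfrac12)$, and the sample requirement of $\minpure$ over $[2^k]$ is met by the stated $n$; (iv) success of $\gappure$ over its domain, met likewise.

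\textbf{Accuracy, argument.} If Step~2 outputs the all-$0$ hypothesis then $\hat m\le\alpha n/3$, so by~(i) $m\le\alpha n/2$, so by~(ii) $\Pr_\cD[c_s=1]\le\alpha$ and that hypothesis has loss $\le\alpha$. Otherwise, by~(iii) $i^*$ is an $\bigl(\alpha n/(6\hat m)\bigr)$-robust minimum of $I$; its Condition~2 with~(i) gives $\ell=|\{j:i_j\le i^*\}|\le\alpha n/2$ (after fixing constants), so by~(ii) $\Pr_\cD[\,i<i^*,\ c_s(i,\sigma)=1\,]\le\alpha$. The crucial structural point is that, since $S$ is labelled by $c_s$, each positive example has $\sigma_j=G(i_j,s)$ and $f(i_j,s)=1$; hence by the forward-computation proposition $\CF(i^*,i_j,\sigma_j)=\langle G(i^*,s),f(i^*,s)\rangle$ for $i_j<i^*$, while $\langle\sigma_{i_j},b_j\rangle=\langle G(i^*,s),f(i^*,s)\rangle$ for $i_j=i^*$ --- so the Step-5 list is $\ell$ identical copies of $v:=\langle G(i^*,s),f(i^*,s)\rangle$ with frequency gap $\ell$, and Condition~1 with~(i) gives $\ell\gtrsim\alpha n$, meeting the gap requirement, so by~(iv) $\langle\sigma^*,b^*\rangle=v$. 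Finally, with $(\sigma^*,b^*)=(G(i^*,s),f(i^*,s))$, check the hypothesis: at $i=i^*$, $h(i^*,\sigma)=f(i^*,s)\cdot\mathbb{I}[\sigma=G(i^*,s)]=c_s(i^*,\sigma)$; at $i>i^*$, $\CF(i,i^*,\sigma^*)=\langle G(i,s),f(i,s)\rangle$ so again $h(i,\sigma)=c_s(i,\sigma)$; and at $i<i^*$, $h\equiv 0$ errs only on positive examples, whose total mass is $\le\alpha$. Hence $\loss_\cD(h)\le\alpha$, and a union bound over the $\le 6$ bad events gives failure probability $\le\beta$.

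\textbf{Efficiency and main obstacle.} Each of $G,f,\CF$ runs in $\poly(d)$ time, so Steps~1, 4, 6 (including evaluation of $h$) do too, and $\minpure,\gappure$ run in time polynomial in $n$ and the logarithm of their domain sizes, hence $\poly(d,n)$; with $n=\poly(d,1/\alpha,\log(1/\beta))$ this gives the stated runtime. I expect the real work to be in the accuracy argument, in two places: (a) upgrading the \emph{in-sample} robustness of $i^*$ to the \emph{population} bound $\Pr_\cD[\,i<i^*,\ c_s=1\,]\le\alpha$, which must hold uniformly over all $2^k$ candidate thresholds and must correctly propagate the data-dependent level $\alpha n/(6\hat m)$ through Conditions~1 and~2 --- this union bound, together with the sample demands of $\minpure$ and $\gappure$, pins down the polynomial sample complexity; and (b) isolating that forward-computing from \emph{every} positive sample point below $i^*$ returns the identical pair $\langle G(i^*,s),f(i^*,s)\rangle$, so that the most-frequent-item step recovers it exactly even though $i^*$ may never appear in $S$ --- which is precisely what makes releasing the hypothesis both correct and safe.
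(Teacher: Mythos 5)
Your proof is correct and follows the same overall structure as the paper's (privacy by three-fold adaptive composition; accuracy by tracing the chain $\hat m \to i^* \to \langle\sigma^*,b^*\rangle$ and then relating sample behavior to population loss). The one place where you genuinely diverge is the generalization step. The paper first bounds the \emph{in-sample} loss $\loss_S(h)\le\alpha/2$ and then generalizes by exploiting the nested, threshold-like structure of the candidate hypotheses $\{h_{i^*}\}_{i^*}$: it identifies the single worst-case index $i_-$ (the largest with $\loss_\cD(h_{i_-})>\alpha$), observes that $\loss_S(h_{i_-})\le\loss_S(h_i)$ for every $i\le i_-$, and so needs only one Chernoff bound, costing $n\gtrsim\log(1/\beta)/\alpha$. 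You instead condition upfront on a uniform-convergence event over all $2^k$ candidate thresholds $\tau$, paying a union bound of $k\approx\sqrt d$ in the exponent and hence $n\gtrsim(\sqrt d+\log(1/\beta))/\alpha$ for this step alone. This extra cost happens to be dominated by the $\minpure$ requirement (which already forces $n\gtrsim(\sqrt d+\log(1/\beta))/\alpha\eps$), so both routes reach the stated sample complexity; the paper's trick is sharper but buys nothing here. Your direct route is also arguably cleaner since it avoids the two-stage ``empirical-then-generalize'' bookkeeping. A smaller stylistic difference: you explicitly flag that a one-entry change of $S$ can insert or delete an element of the derived datasets $I$ and the Step-5 list rather than merely substitute one; the paper elides this, but you are right that it is harmless and absorbed into the subroutines' guarantees.
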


\begin{proof}
	Algorithm~\ref{alg:learner} is an adaptive composition of three $(\eps/3)$-differentially private algorithms, hence $\eps$-differentially private by Lemma~\ref{lem:composition}.
	
	\medskip
	
	To show that it is a PAC learner, we first argue that the hypothesis produced achieves low error with respect to the sample $S$, and then argue that it generalizes to the underlying distribution. That is, we first show that for every realizable sample $S$, with probability at least $1-\beta / 2$ over the randomness of the learner alone, the hypothesis $h$ satisfies
	\[\loss_S(h) = \frac{1}{n} \sum_{k = 1}^n \mathbb{I}[h(x_k) \ne b_k] \le \alpha/2.\]
	
	We consider several cases based on the number of positive examples $m = |S_+|$. First suppose $m \le \alpha n / 4$. Then Lemma~\ref{lem:laplace} guarantees that the algorithm outputs the all-$0$ hypothesis with probability at least $1-\beta$ in Step 2, and this hypothesis has sample loss at most $\alpha / 4$.
	
	Now suppose $m = |S_+| \ge \alpha n / 2$. Then with probability at least $1 - \beta / 3$ we have $|\hat{m} - m| \le 3\log(6/\beta) / \eps$. In particular this means $\hat{m} \ge \alpha n /3$, so the algorithm continues past Step 2. Now with probability at least $1-\beta/3$, Step 3 identifies a point $i^*$ such that $|\{i \in I : i \le i^*\}| \ge (\alpha n / 6\hat{m}) \cdot m \ge \alpha n /10$ and $|\{i \in I : i \le i^*\}| \le (\alpha n / 6\hat{m}) \cdot m \le \alpha n/2$. The first condition, in particular, guarantees that $\ell \ge \alpha n / 10$. Realizability of the sample $S$ guarantees that the points $\langle \hat{\sigma}_1, \hat{b}_1\rangle, \dots, \langle \hat{\sigma}_\ell, \hat{b}_\ell\rangle$ are all identical. So with the parameter $\gap = \ell \ge \alpha n / 10$, Step 5 succeeds in outputting their common value $\langle \sigma^*, b^*\rangle$ with probability at least $1 - \beta / 3$.
	
	We now argue that the hypothesis $h$ produced in Step 6 succeeds on all but $\alpha n / 2$ examples. A case analysis shows that the only input samples on which $h$ makes an error are those $(i_j, \sigma_j) \in S_+$ for which $i_j < i^*$. The success criterion of Step 3 ensures that the number of such points is at most $\alpha n /2$.
	
	The final case where $\alpha n/4 < m < \alpha n /2$ is handled similarly, except it is now also acceptable for the algorithm to terminate early in Step 2, outputting the all-$0$ hypothesis.
	
	We now argue that achieving low error with respect to the sample is sufficient to achieve low error with respect to the distribution:  If the learner above achieves sample loss $\loss_S(h) \le \alpha / 2$ with probability at least $1-\beta/2$, then it is also an $(\alpha, \beta)$-PAC learner for $\ows_d$ when given at least $n \ge 8\log(2/\beta) / \alpha$ samples. The analysis follows the standard generalization argument for one-dimensional threshold functions, and our presentation follows~\cite{BunNSV15}.
	
	Fix a realizable distribution $\cD$ (labeled by concept $c_s$) and let $\cH$ be the set of hypotheses that the learner could output given a sample from $\cD$. That is, $\cH$ consists of the all-$0$ hypothesis and every hypothesis of the form $h_{i^*}$ as constructed as in Step 6. We may express the all-$0$ hypothesis as $h_{2^d + 1}$. It suffices to show that for a sample $S$ drawn i.i.d. from a realizable distribution $\cD$ that
	\[\Pr[\exists h \in \cH: \loss_{\cD}(h) \le \alpha \text{ and } \loss_S(h) \le \alpha/2] \le \beta / 2.\]
	Let $i_-$ be the largest number such that $\loss_\cD(h_{i_-}) > \alpha$. If some $h_i$ has $\loss_\cD(h_{i}) > \alpha$ then $i \le i_-$, and hence for any sample, $\loss_S(h_{i-}) \le \loss_S(h_i)$. So it  suffices to show that
	\[\Pr[\loss_S(h_{i-}) \le \alpha/2] \le \beta/2.\]
	Define $E = \{(i, G(i, s)) : i < i_- \text{ and } f(i, s) = 1\}$ to be the set of examples on which $h_{i-}$ makes a mistake. By a Chernoff bound, the probability that after $n$ independent samples from $\cD$, fewer than $\alpha n / 2$ appear in $E$ is at most $\exp(-\alpha n / 8) \le \beta/2$ provided $n \ge 8\log(2/\beta) / \alpha$.
	
\end{proof}

The same argument, replacing the use of $\minpure$ with $\minapprox$ and $\gappure$ with $\gapapprox$ in Algorithm~\ref{alg:learner} yields

\begin{theorem} \label{thm:private-not-online-approx}
	There is an $(\eps, \delta)$-differentially private and $(\alpha, \beta)$-PAC learner for $\ows_d$ running in time $\poly(d, 1/\alpha, \log(1/\beta))$ using
	\[n  = \tilde{O}\left(\frac{(\log^* d)^{1.5} \cdot \log^{1.5}(1/\delta) \cdot \log(1/\beta)}{\alpha \eps}\right)\]
	samples.
\end{theorem}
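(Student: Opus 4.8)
The plan is to re-run the proof of Theorem~\ref{thm:private-not-online-pure} essentially verbatim, substituting the approximate-DP primitives and re-accounting for where the sample budget is spent. For privacy: the modified Algorithm~\ref{alg:learner} is still an adaptive composition of three mechanisms run on the same dataset---the Laplace estimate of $m$ in Step~2, which is $(\eps/3,0)$-DP; the call to $\minapprox$ in Step~3 with budget $(\eps/3,\delta/2)$; and the call to $\gapapprox$ in Step~5 with budget $(\eps/3,\delta/2)$---so Lemma~\ref{lem:composition} gives $(\eps,\delta)$-differential privacy overall. The running time remains $\poly(d,1/\alpha,\log(1/\beta))$ since $\minapprox$, $\gapapprox$, and $\CF$ are all polynomial-time and operate on inputs of polynomial size.

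For accuracy on the sample I would reuse the same case analysis on $m = |S_+|$; the only places the sample size enters are (i) the Laplace estimate $\hat m$ in Step~2, which needs $n \gtrsim \log(1/\beta)/(\alpha\eps)$ to be accurate to within $O(\alpha n)$; (ii) the $\minapprox$ call in Step~3; and (iii) the $\gapapprox$ call in Step~5. For (ii), when the algorithm reaches Step~3 we have $m = \Theta(\hat m) \ge \alpha n/3$, so locating a $(\alpha n/6\hat m)$-robust minimum of the length-$m$ list $I$ amounts, via the reduction in the proof of the robust-minimum theorem, to an interior-point instance on $\Theta(\alpha' m) = \Theta(\alpha n)$ points over the domain $[2^k]$ with $k = \lfloor\sqrt d\rfloor-1$; by part~2 of that theorem this succeeds once $\alpha n \gtrsim \tilde{O}\!\big((\log^* 2^k)^{1.5}\log^{1.5}(1/\delta)\log(1/\beta)/\eps\big)$. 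For (iii), the gap passed to $\gapapprox$ is $\ell \ge \alpha n/10$, exactly as in the pure proof, and Theorem~\ref{thm:freq}(2) only requires this to exceed $O(\log(n/\delta\beta)/\eps) = \polylog(d,1/\delta,1/\beta)/\eps$, which is dominated. The case analysis then concludes as before: $h$ errs only on positive sample points with index strictly below $i^*$, of which the success criterion of Step~3 leaves at most $\alpha n/2$.

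Generalization from $\loss_S(h)\le \alpha/2$ to $\loss_\cD(h)\le \alpha$ is unchanged---it is the same threshold-function Chernoff argument and requires only $n \ge 8\log(2/\beta)/\alpha$, again dominated. Taking the maximum of the three constraints (i)--(iii) and the generalization bound yields $n = \tilde{O}\!\big((\log^* d)^{1.5}\log^{1.5}(1/\delta)\log(1/\beta)/(\alpha\eps)\big)$, as claimed.

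There is no genuine obstacle here: the work is bookkeeping, not a new idea. The one point that needs care is the domain-size dependence. Since examples of $\ows_d$ are pairs $(i,\sigma)\in\bits^k\times\bits^{d-k}$ with $k\approx\sqrt d$, a naive reading of $\minapprox$ would charge $(\log^* R)^{1.5}$ with $R = 2^k$ as large as $2^{\sqrt d}$; the reason the final bound still reads $(\log^* d)^{1.5}$ is the one-iterate collapse $\log^*(2^k) = 1 + \log^*(k) = O(\log^* d)$, and I would state this (and note the analogous, but harmless, blow-up inside $\gapapprox$) explicitly. It is also worth checking that the $\Theta(\alpha n)$ accounting in step (ii) above is correct---that the robustness parameter $\alpha n/6\hat m$ and the list length $m$ multiply to $\Theta(\alpha n)$ precisely because $\hat m = \Theta(m)$ in the surviving case---since this is what keeps the $1/\alpha$ (rather than $1/\alpha^2$) in the denominator.
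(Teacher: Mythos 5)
Your proposal is correct and follows essentially the same route as the paper, whose own proof is just the one-line observation that the argument for the pure case goes through verbatim after substituting $\minapprox$ for $\minpure$ and $\gapapprox$ for $\gappure$. The additional bookkeeping you supply---in particular the explicit observation that $\log^*(2^k) = 1 + \log^*(k) = O(\log^* d)$ for $k \approx \sqrt d$, which is what keeps the bound at $(\log^* d)^{1.5}$ rather than $(\log^* 2^{\sqrt d})^{1.5}$ read naively---is correct and fills in a detail the paper leaves implicit.
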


\subsection{Comparison to~\cite{GonenHM19}} \label{sec:GHM}

Gonen et al.~\cite{GonenHM19} proved a positive result giving conditions under which pure-private learners can be efficiently compiled into online learners. The purpose of this section is to describe their model and result and, in particular, explain why it does not contradict Theorem~\ref{thm:private-not-online-pure}.

Their reduction works as follows. Let $\cC$ be a concept class that is pure-privately learnable (with fixed constant privacy and accuracy parameters) using $m_0$ samples. Consider running this algorithm roughly $N = \exp(m_0)$ times on a fixed dummy input, producing hypotheses $h_1, \dots, h_N$. Pure differential privacy guarantees that for every realizable distribution on labeled examples, with high probability one of these hypotheses $h_i$ will have small loss. This idea can be used to construct an online learner for $\cC$ by treating the random hypotheses $h_1, \dots, h_N$ as experts and running multiplicative weights to achieve no-regret with respect to the best one online.

As it is described in~\cite{GonenHM19}, this is a computationally efficient reduction from no-regret learning to \emph{uniform} pure-private PAC learning. In the uniform PAC model, there is a single infinite concept class $\cC$ consisting of functions $c : \bits^* \to \bits$. An efficient uniform PAC learner for $\cC$ uses $m(\alpha, \beta)$ samples to learn a hypothesis with loss at most $\alpha$ and failure probability $\beta$ in time $\poly(|c|, 1/\alpha, 1/\beta)$. Note that the number of samples $m(\alpha, \beta)$ is completely independent of the target concept $c$. This contrasts with the non-uniform model, where the number of samples is allowed to grow with $d$, the domain size of $c$.

Another noteworthy difference comes when we introduce differential privacy. In the uniform model, one can move to a neighboring dataset by changing a single entry to any element of $\bits^*$. In the non-uniform model, on the other hand, an entry may only change to another element of the same $\bits^d$. This distinction affects results for pure-private learning, as we will see below. However, it does not affect $(\eps, \delta)$-DP learning, as one can always first run the algorithm described in Theorem~\ref{thm:freq} to privately check that all or most of the elements in the sample come from the same $\bits^d$.

A simple example to keep in mind when considering feasibility of learning in the uniform model is the class of point functions $\point = \{p_x : x \in \bits^*\}$ where $p_x(y) = 1$ iff $x = y$. This class is efficiently uniformly PAC-learnable using $m(\alpha, \beta) = O(\log(1/\beta) / \alpha)$ samples by returning $p_x$ where $x$ is any positive example in the dataset.

The class $\point$ turns out to be uniformly PAC-learnable with pure differential privacy as well~\cite{Beimel19Pure}. However, this algorithm is \emph{not} computationally efficient. The following claim shows that this is inherent, as indeed it is even impossible to uniformly learn $\point$ using hypotheses with short description lengths.

\begin{proposition} \label{prop:impossible}
	Let $L$ be a pure $1$-differentially private algorithm for uniformly $(1/2, 1/2)$-PAC learning $\point$. Then for every labeled sample $S$, we have $\Expectation_{h \gets L(S)}[|h|] = \infty$.
\end{proposition}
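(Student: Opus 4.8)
The plan is to show that a finite value of $\Expectation_{h \gets L(S)}[|h|]$ would force $L(S)$ to be concentrated (up to small probability) on a \emph{finite} family of hypotheses, and that no finite family of hypotheses is consistent with the constraints that the learning guarantee and pure differential privacy jointly place on the output distribution $L(S)$.

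First I would isolate the key consequence of privately and accurately learning $\point$. Write $m$ for the number of samples that $L$ draws. The claim is that for every \emph{pair} of distinct strings $x,z \in \bits^*$,
\[
\Prob{h \gets L(S)}{h(x) = 1 \text{ and } h(z) = 0} \;\ge\; p \;:=\; \tfrac12 e^{-m}.
\]
To see this, consider the distribution $\cD_{x,z}$ placing mass $1/2$ on $(x,1)$ and mass $1/2$ on $(z,0)$; since $z \ne x$, it is realizable, being labeled by the point concept $p_x \in \point$. Any hypothesis $h$ with population loss strictly below $1/2$ on $\cD_{x,z}$ must agree with $p_x$ on both atoms, i.e.\ satisfy $h(x) = 1$ \emph{and} $h(z) = 0$. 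Hence the learning guarantee says that with probability at least $1/2$ over a random $m$-sample from $\cD_{x,z}$ and the coins of $L$, the output $h$ has this property; averaging over the sample produces one \emph{fixed} $m$-sample $S_{x,z}$ with $\Prob{h \gets L(S_{x,z})}{h(x) = 1 \text{ and } h(z) = 0} \ge 1/2$. Since $S_{x,z}$ and $S$ are both $m$-samples they differ in at most $m$ entries, so the group-privacy property of pure $1$-differential privacy (which follows from Definition~\ref{def:private} by a hybrid argument along a length-$m$ path of datasets, costing a factor $e$ per step) transfers this bound to $S$ at a multiplicative cost of at most $e^m$, establishing the displayed inequality.

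Now I would assume toward a contradiction that $\Expectation_{h\gets L(S)}[|h|] = C < \infty$. By Markov's inequality, $\Prob{h \gets L(S)}{|h| \ge 2C/p} \le p/2$, so letting $\cH_0$ be the set of hypotheses of description length less than $2C/p$ --- a \emph{finite} set, since there are only finitely many strings that short --- we have $\Prob{h \gets L(S)}{h \in \cH_0} \ge 1 - p/2$. Combined with the previous step, for every pair of distinct $x,z$,
\[
\Prob{h \gets L(S)}{h \in \cH_0,\ h(x) = 1,\ h(z) = 0} \;\ge\; p - \tfrac{p}{2} \;>\; 0,
\]
so there exists $h \in \cH_0$ with $h(x) = 1$ and $h(z) = 0$. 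For $x \in \bits^*$ set $C_x := \{h \in \cH_0 : h(x) = 1\} \subseteq \cH_0$; the previous sentence says exactly that $C_x \not\subseteq C_z$ for all distinct $x,z$. But $\bits^*$ is infinite while $\cH_0$ has only finitely many subsets, so some two distinct strings satisfy $C_x = C_z$, contradicting $C_x \not\subseteq C_z$. Hence $\Expectation_{h \gets L(S)}[|h|] = \infty$.

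The step I expect to require the most care is the first one: one must verify that the learning guarantee really forces the \emph{conjunction} $h(x) = 1 \wedge h(z) = 0$ --- a hypothesis with loss exactly $1/2$ on $\cD_{x,z}$, such as the all-$1$s hypothesis, fails it, so it is here that accuracy (as opposed to mere consistency on a single atom) is essential --- and one must track the group-privacy blowup so that the resulting $p$ is a genuine positive constant, uniform over all choices of $x$ and $z$. The subsequent Markov and pigeonhole steps are routine.
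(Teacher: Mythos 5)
Your proof is correct and follows essentially the same line of argument as the paper: use the learning guarantee on the two-point realizable distribution supported on $(x,1)$ and $(z,0)$, transfer the conclusion from a favorable sample $S_{x,z}$ to the fixed sample $S$ via group privacy (paying $e^m$), apply Markov's inequality to isolate a finite hypothesis set carrying nearly all the mass, and observe that no finite hypothesis set can provide, for every ordered pair of distinct strings, a hypothesis separating them. The only cosmetic differences are that you argue by contradiction rather than directly showing $\Expectation_{h\gets L(S)}[|h|]\ge t$ for every $t$, and you dispatch the combinatorial step by a direct pigeonhole on the sets $C_x=\{h\in\cH_0: h(x)=1\}$ instead of invoking the quantitative bound of Lemma~\ref{lem:combinatorial}.
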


\begin{proof}
	Let $t > 0$. We will show that $\Expectation[|h|] \ge t$. Let $n$ be the number of samples used by $L$. Let $\cH_t$ be the set of all functions $h : \bits^* \to \bits$ with description length $|h| \le 2e^n t$. Let $x, y \in \bits^*$ be any pair of points such that $h(x) = 0$ or $h(y) = 1$ for every $h \in \cH_t$. Such a pair $x, y$ exists by the simple combinatorial Lemma~\ref{lem:combinatorial} stated and proved below, together with the fact that $\cH_t$ is finite. 
	
	Consider the target concept $c = p_x$ and the distribution $\cD$ that is uniform over $(x, 1)$ and $(y, 0)$. Then accuracy of the learner $L$ requires that $\Pr_{S' \sim \cD^n}[L(S') \notin \cH_t] \ge 1/2$. Since any sample $S'$ can be obtained from $S$ by changing at most $n$ elements of $S$, pure differential privacy implies that $\Pr[L(S) \notin \cH_t] \ge e^{-n} / 2$. Hence $\Expectation_{h \gets L(S)}[|h|] \ge 2e^n t \cdot e^{-n} / 2 \ge t$ as we wanted to show.

\end{proof}

\begin{lemma} \label{lem:combinatorial}
	Let $\cS = \{S_1, \dots, S_n\}$ be a collection of subsets of $[m]$ such that for every pair $x, y \in [m]$ there exists $i \in [n]$ such that $x \in S_i$ and $y \notin S_i$. Then $n \ge \log m + 1$.
\end{lemma}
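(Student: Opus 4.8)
The plan is to recast the separating condition on $\{S_1,\dots,S_n\}$ as an antichain condition in the Boolean cube. To each element $a\in[m]$ I associate its \emph{signature} $v_a\in\{0,1\}^n$ defined by $(v_a)_i = 1$ iff $a\in S_i$. The first step is to observe that the hypothesis of the lemma is exactly equivalent to the statement that $\{v_a : a\in[m]\}$ is an antichain under the coordinatewise partial order on $\{0,1\}^n$: for $a\ne b$, the existence of some $i$ with $a\in S_i$ and $b\notin S_i$ says precisely that $v_a\not\le v_b$, so applying the hypothesis to the ordered pair $(a,b)$ and again to $(b,a)$ shows that $v_a$ and $v_b$ are incomparable (and in particular distinct, so $a\mapsto v_a$ is injective).

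The second step is to bound the size of an antichain in $\{0,1\}^n$ by $2^{n-1}$. Rather than invoking Sperner's theorem (which would give the stronger $m\le\binom{n}{\lfloor n/2\rfloor}$), the cheap route to exactly this bound is to partition the $2^n$ vertices of the cube into $2^{n-1}$ pairs $\{u,\,u\oplus e_1\}$ obtained by flipping the first coordinate; each such pair is a two-element chain, so any antichain contains at most one vertex from each pair, for a total of at most $2^{n-1}$ vertices.

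Combining the two steps yields $m = |\{v_a : a\in[m]\}| \le 2^{n-1}$, i.e.\ $n \ge \log m + 1$.

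The one point worth flagging is that the additive $+1$ — as opposed to merely $n\ge\log m$, which would follow from injectivity of $a\mapsto v_a$ alone — genuinely requires the full antichain structure, i.e.\ that \emph{both} $(a,b)$ and $(b,a)$ are separated; the chain-decomposition argument is exactly what converts this extra bit of information into the factor-of-two improvement. The degenerate case $m\le 1$, where the hypothesis is vacuous, is handled trivially and separately.
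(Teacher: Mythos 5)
Your proof is correct and arrives at the bound by a genuinely different route from the paper's. The paper argues by induction on $m$: one of $S_1$ or $\overline{S_1}$ has size at least $m/2$, and intersecting the remaining $n-1$ sets with that majority side preserves the separation property on it, so $n - 1 \ge \log(m/2) + 1$. You instead encode each $a \in [m]$ by its membership vector $v_a \in \bits^n$, observe that applying the hypothesis to both ordered pairs $(a,b)$ and $(b,a)$ says exactly that $\{v_a : a \in [m]\}$ is an antichain in the Boolean lattice, and then bound any antichain by $2^{n-1}$ via the chain decomposition that pairs $u$ with $u \oplus e_1$. Both arguments exploit the two-sidedness of the hypothesis to extract the additive $+1$: the paper by being free to recurse into whichever of $S_1$, $\overline{S_1}$ is the larger half, you by needing $v_a$ and $v_b$ to be mutually incomparable so that the pairing can discard one of them. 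Your rephrasing makes the Sperner-type structure explicit and makes visible that the bound could in principle be tightened to $m \le \binom{n}{\lfloor n/2 \rfloor}$, while the paper's halving induction is self-contained and closer in spirit to a decision-tree lower bound. Both implicitly assume $m \ge 2$ (the paper via its base case, you by flagging $m \le 1$ as degenerate), which is all the application requires.
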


\begin{proof}
	We prove this by induction on $m$. If $m = 2$, then clearly we need $n \ge 2$. Now suppose inductively that $\cS$ partitions the set $[m]$ in the described fashion. Then either $S_1$ or $\overline{S_1}$ has size at least $m/2$. If $|S_1| \ge m/2$, then the $n-1$ sets $S_2 \cap S_1, \dots, S_n \cap S_1$ partition a set of size $m/2$ in the described way, so $n-1 \ge \log(m/2) + 1$, i.e., $n \ge \log m + 1$. A similar argument holds if $|\overline{S_1}| \ge m/2$.
\end{proof}

In Appendix~\ref{app:impossible} we generalize this argument to show that \emph{every} infinite concept class is impossible to learn uniformly with pure differential privacy:

\begin{proposition} \label{prop:more-impossible}
	Let $L$ be a pure $1$-differentially private algorithm for uniformly $(1/2, 1/2)$-PAC learning an infinite concept class $\cC$. Then for every labeled sample $S$, we have $\Expectation_{h \gets L(S)}[|h|] = \infty$.
\end{proposition}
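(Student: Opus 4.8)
The plan is to follow the proof of Proposition~\ref{prop:impossible} essentially line for line, substituting for the special structure of $\point$ (and the role played there by Lemma~\ref{lem:combinatorial}) a purely combinatorial fact about infinite classes. Fix $t>0$; as before it suffices to show $\Expectation_{h\gets L(S)}[|h|]\ge t$. Let $n$ be the (finite) number of samples used by $L$ and let $\cH_t=\{h:\bits^*\to\bits : |h|\le 2e^n t\}$, a finite set of functions. Once we exhibit a single realizable instance --- a target $c\in\cC$ together with a distribution $\cD$ on labeled examples --- on which \emph{every} $h\in\cH_t$ has $\loss_\cD(h)\ge 1/2$, the remainder is identical to Proposition~\ref{prop:impossible}: the $(1/2,1/2)$-PAC guarantee of $L$ forces $\Pr_{S'\sim\cD^n}[L(S')\notin\cH_t]\ge 1/2$; any sample $S'$ differs from the fixed sample $S$ in at most $n$ entries, so group privacy for the $1$-differentially private algorithm $L$ gives $\Pr[L(S)\notin\cH_t]\ge e^{-n}/2$; and hence $\Expectation_{h\gets L(S)}[|h|]\ge 2e^n t\cdot e^{-n}/2=t$.

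It remains to build the hard instance, and this is where I would use the following generalization of Lemma~\ref{lem:combinatorial}. Define on $\bits^*$ the equivalence relation $x\sim x'$ iff $h(x)=h(x')$ for every $h\in\cH_t$; since $\cH_t$ is finite, $\sim$ has at most $2^{|\cH_t|}$ classes. Define also $x\approx x'$ iff $c(x)=c(x')$ for every $c\in\cC$; if $\approx$ had only finitely many classes $B_1,\dots,B_r$, then every concept in $\cC$ would be constant on each $B_i$ and thus determined by its $r$-tuple of block values, forcing $|\cC|\le 2^r$. Since $\cC$ is infinite, $\approx$ has infinitely many classes. Choosing one representative from each $\approx$-class produces infinitely many points, so by pigeonhole two of them, say $x\ne y$, lie in the same $\sim$-class. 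Then every $h\in\cH_t$ satisfies $h(x)=h(y)$, whereas by choice of $x,y$ some $c\in\cC$ has $c(x)\ne c(y)$; after possibly swapping $x$ and $y$ we may assume $c(x)=1$ and $c(y)=0$.

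Take this $c$ as the target and let $\cD$ be uniform over the labeled examples $(x,1)$ and $(y,0)$; this is realizable because $c(x)=1$ and $c(y)=0$. Since every $h\in\cH_t$ satisfies $h(x)=h(y)$, each such $h$ misclassifies at least one of the two examples, so $\loss_\cD(h)\ge 1/2$ --- exactly the conclusion that the pair produced by Lemma~\ref{lem:combinatorial} yields in Proposition~\ref{prop:impossible}. Feeding this instance into the argument of the first paragraph proves $\Expectation_{h\gets L(S)}[|h|]\ge t$ for every $t>0$, hence the claimed equality. As in Proposition~\ref{prop:impossible}, nothing here uses efficiency or any particular sample complexity of $L$ beyond finiteness.

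I expect the only real obstacle to be the combinatorial core of the second paragraph: the claim that a finite hypothesis family must fail to separate some pair of points that the infinite class $\cC$ does separate. The two-equivalence-relation packaging above is the cleanest route I see; an alternative is to follow the $\point$ argument more literally by restricting attention to a sufficiently large finite subdomain and applying the $\cC$-infinite analogue of Lemma~\ref{lem:combinatorial} there. Everything downstream of this step is a direct transcription of Proposition~\ref{prop:impossible}.
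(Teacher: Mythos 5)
Your reduction to the mechanics of Proposition~\ref{prop:impossible} — fix $t$, take $\cH_t$ to be the finite set of hypotheses of description length at most $2e^n t$, exhibit a realizable $(c,\cD)$ on which every $h\in\cH_t$ has loss at least $1/2$, and then invoke the PAC guarantee plus group privacy — is exactly what the paper does, and that part is fine. Where you diverge from the paper is in the combinatorial core, and both routes are correct. The paper introduces a notion of a finite family $\cS$ of subsets \emph{generating} a set $T$ (Lemma~\ref{lem:more-combinatorial}), bounds the number of generated sets by $2^{2^n}$ via a ``pairwise separated'' subset of size at most $2^{n-1}$, and concludes that some $c\in\cC$ is not generated by $\cH_t$, yielding $x,y$ with $c(x)=1$, $c(y)=0$ and $h(x)=0$ or $h(y)=1$ for all $h\in\cH_t$. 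You instead pigeonhole two equivalence relations: $\sim$ induced by $\cH_t$ has at most $2^{|\cH_t|}$ classes, while $\approx$ induced by $\cC$ has infinitely many classes since $\cC$ is infinite, so two representatives of distinct $\approx$-classes fall into a common $\sim$-class. This gives the slightly stronger conclusion $h(x)=h(y)$ for every $h\in\cH_t$, which of course implies the paper's ``$h(x)=0$ or $h(y)=1$'' once you orient $c$ so that $c(x)=1$ and $c(y)=0$. Your argument is more elementary — it avoids defining generation and pairwise separation and bypasses the Lemma~\ref{lem:combinatorial}/\ref{lem:more-combinatorial} machinery — whereas the paper's route yields a reusable quantitative lemma (the $2^{2^n}$ bound) that may be of independent interest. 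Both are counting arguments at heart, and yours is a perfectly valid and arguably cleaner substitute for this proposition.
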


At first glance, this may seem to make the construction of~\cite{GonenHM19} vacuous. However, it is still of interest as it can be made to work in non-uniform model of pure-private PAC learning under the additional assumption that the pure-private learner is highly sample efficient. That is, if $\cC_d$ is learnable using $m(d) = O(\log d)$ samples, then the number of experts $N$ remains polynomial. There \emph{is} indeed a computationally efficient non-uniform pure private learner for $\point$ with sample complexity $O(1)$~\cite{Beimel19Pure} that can be transformed into an efficient online learner using their algorithm. This does not contradict our negative result Theorem~\ref{thm:private-not-online-pure}, as that pure-private learner uses sample complexity $O(\sqrt{d})$.

\section{Conclusion}\label{sec:conc}
In this paper, we showed that under cryptographic assumptions, efficient private learnability does not necessarily imply efficient online learnability. Our work raises a number of additional questions about the relationship between efficient learnability between the two models. 

\paragraph{Uniform approximate-private learning.} In Section~\ref{sec:GHM} we discussed the uniform model of (private) PAC learning and argued that efficient learnability is impossible under pure privacy. It is, however, possible under approximate differential privacy, e.g., for point functions. Thus it is of interest to determine whether uniform approximate-private learners can be efficiently transformed into online learners. Our negative result for non-uniform learning uses sample complexity $\tilde{O}((\log^* d)^{1.5})$ to approximate-privately learn the class $\ows_d$, so it does not rule out this possibility.

\paragraph{Efficient conversion from online to private learning.} Is a computationally efficient version of~\cite{BunLM20} possible? Note that to exhibit a concept class $\cC$ refuting this, $\cC$ must in particular be efficiently PAC learnable but not efficiently privately PAC learnable. There is an example of such a class $\cC$ based on ``order-revealing encryption''~\cite{BunZ16}. However, a similar adversary argument as what is used for $\ows$ can be used to show that this class $\cC$ is also not efficiently online learnable.

\paragraph{Agnostic private vs. online learning.} Finally, we reiterate a question of~\cite{GonenHM19} who asked whether efficient \emph{agnostic} private PAC learners can be converted to efficient agnostic online learners.

\section*{Acknowledgements}
I thank Shay Moran for helpful discussions concerning the relationship of this work to~\cite{GonenHM19}.
\bibliographystyle{alpha}
\bibliography{biblio}

\appendix

\section{General Impossibility of Pure-Private Uniform Learning} \label{app:impossible}

\begin{proof}[Proof of Proposition~\ref{prop:more-impossible}]
	The proof is identical to that of Proposition~\ref{prop:impossible}, except that we need to show that given any finite set of hypotheses $\cH_t$, there exists a concept $c \in \cC$ and a pair $x, y$ such that $c(x) = 1$ and $c(y) = 0$ but $h(x) = 0$ or $h(y) = 1$ for every $h \in \cH_t$. The existence of such a concept follows from the variant of Lemma~\ref{lem:combinatorial} stated below.
\end{proof}

Let $\cS = \{S_1, \dots, S_n\}$ be a collection of subsets of $\bits^*$. We say that $\cS$ \emph{generates} another set $T \subseteq \bits^*$ if for every pair $x, y \in \bits^*$ with $x \in T$ and $y \notin T$, there exists $i \in [n]$ such that $x\in S_i$ and $y \notin S_i$.

\begin{lemma} \label{lem:more-combinatorial}
	A collection $\cS = \{S_1, \dots, S_n\}$ generates at most $2^{2^n}$ distinct sets $T \subseteq \bits^*$.
\end{lemma}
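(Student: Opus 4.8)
The plan is to observe that any set $T$ generated by $\cS$ is forced to be a union of a small number of ``atoms'' determined by $\cS$, and to count the unions. For each $x \in \bits^*$ define its \emph{type} $\tau(x) \in \bits^n$ by setting $\tau(x)_i = 1$ iff $x \in S_i$. This partitions $\bits^*$ into the type classes $\tau^{-1}(v)$, $v \in \bits^n$; there are at most $2^n$ of them (some may be empty).

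The key step is to show that whenever $\cS$ generates $T$, the set $T$ is a union of type classes. Suppose not: then there are $x, x' \in \bits^*$ with $\tau(x) = \tau(x')$ but $x \in T$ and $x' \notin T$. Applying the definition of ``generates'' to this pair yields some $i \in [n]$ with $x \in S_i$ and $x' \notin S_i$, i.e. $\tau(x)_i = 1$ and $\tau(x')_i = 0$, contradicting $\tau(x) = \tau(x')$. Hence $T$ is a union of type classes. The number of such unions is at most $2^{(\text{number of nonempty type classes})} \le 2^{2^n}$, which gives the claimed bound.

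I do not expect a real obstacle here; the only points requiring a little care are that we are proving an \emph{upper} bound, so we need not (and do not) argue that every union of type classes is actually generated, and that empty type classes are harmless since they only make the count smaller. The lemma then feeds into Proposition~\ref{prop:more-impossible} exactly as Lemma~\ref{lem:combinatorial} feeds into Proposition~\ref{prop:impossible}: a finite hypothesis set $\cH_t$ of size $N$ generates at most $2^{2^N}$ sets, so as long as $\cC$ is infinite there is a concept $c \in \cC$ whose positive set is not generated, providing the separating pair $x,y$ with $c(x)=1$, $c(y)=0$ while every $h \in \cH_t$ has $h(x)=0$ or $h(y)=1$.
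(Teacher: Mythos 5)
Your proof is correct, and it takes a genuinely different and in fact cleaner route than the paper. The paper reduces to Lemma~\ref{lem:combinatorial}: it first closes $\cS$ under complementation, introduces the notion of a set $R$ that is \emph{pairwise separated} by $\cS$ (whose maximum size $r$ is bounded by $2^{n-1}$ via Lemma~\ref{lem:combinatorial}), and then argues that any generated $T$ is determined by $T \cap R$, yielding at most $2^r$ possibilities. Your argument bypasses all of this machinery: you observe directly that the definition of ``generates,'' applied to a pair $x \in T$, $x' \notin T$ with $\tau(x) = \tau(x')$, immediately produces a contradiction, so $T$ must be a union of the at most $2^n$ atoms of the Venn diagram of $S_1, \dots, S_n$, giving the bound $2^{2^n}$ outright. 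This atom-counting argument is self-contained (it never invokes Lemma~\ref{lem:combinatorial} or complement closure), shorter, and has no need for the extremal set $R$ or the ``determined by restriction'' step. The one thing the paper's route buys is reuse of an already-proved lemma, but that economy comes at the cost of extra bookkeeping (doubling $\cS$, tracking the separated set $R$, the maximality contradiction), and your version is plainly preferable as a standalone proof. Your account of how the lemma plugs into Proposition~\ref{prop:more-impossible} is also accurate.
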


\begin{proof}
	By doubling the size of $\cS$ we may assume it is closed under complement, i.e., $S \in \cS$ iff $\overline{S} \in \cS$. Let us say that a set $R \subseteq \bits^*$ is \emph{pairwise separated} by $\cS$ if for every pair $x, y \in R$, there exists $i \in [n]$ such that $x \in S_i$ and $y \notin S_i$. Let $r$ denote the maximum size of a set that is pairwise separated by $\cS$. Lemma~\ref{lem:combinatorial} guarantees shows that $r \le 2^{n-1}$. We will show that if $T$ is generated by $\cS$, then determining the membership of each element of $R$ in $T$ completely determines the set $T$. Therefore, there are at most $2^r \le 2^{2^{n-1}}$ possible choices for $T$.
	
	To see this, suppose for the sake of contradiction that there are two sets $T_1, T_2$ that are generated by $\cS$ for which $T_1 \cap R = T_2 \cap R := I$. Let $z$ be an element on which $T_1, T_2$ disagree; say $z \in T_1$ but $z \notin T_2$. We derive our contradiction by showing that $R \cup \{z\}$ is pairwise separated by $\cS$, contradicting the maximality of $R$. To do so, all we need to show is that for every $y \in R$, there exists $S_i$ such that $z \in S_i$ and $y \notin S_i$, and that there exists $S_j$ such that $z \notin S_j$ and $y \in S_j$. If $y\in I$, we can take $S_i$ to be the set such that $z \notin \overline{S_i}$ and $y \in \overline{S_i}$ as guaranteed by the fact that $\cS$ generates $T_2$. If $y \notin I$, we can take $S_i$ to be the set such that $z \in S_i$ and $y \notin S_i$ as guaranteed by the fact that $\cS$ generates $T_1$. A similar argument can be used to construct $S_j$.
\end{proof}

\end{document}